\newcommand{\lr}[1]{\left (#1 \right)}
\newcommand{\lrc}[1]{\left \{ #1 \right \}}
\newcommand{\lrs}[1]{\left [ #1 \right ]}
\newcommand{\innerprod}[2]{\langle #1, #2\rangle}
\NewDocumentCommand{\E}{o}{\mathbb E\IfValueT{#1}{\lrs{#1}}}
\NewDocumentCommand{\1}{o}{\mathds 1{\IfValueT{#1}{\lr{#1}}}}
\let\P\undefined
\NewDocumentCommand{\P}{o}{\mathbb P{\IfValueT{#1}{\lr{#1}}}}
\newcommand{\Reg}{\overline{Reg}_T}
\title[Improved Analysis of the Tsallis-INF Algorithm]{Improved Analysis of the Tsallis-INF Algorithm in Stochastically Constrained Adversarial Bandits and Stochastic Bandits with Adversarial Corruptions}
\begin{document}

\maketitle

\begin{abstract}%
We derive improved regret bounds for the Tsallis-INF algorithm of Zimmert and Seldin (2021). We show that in adversarial regimes with a $(\Delta,C,T)$ self-bounding constraint the algorithm achieves 
$\mathcal{O}\left(\left(\sum_{i\neq i^*} \frac{1}{\Delta_i}\right)\log_+\left(\frac{(K-1)T}{\left(\sum_{i\neq i^*} \frac{1}{\Delta_i}\right)^2}\right)+\sqrt{C\left(\sum_{i\neq i^*}\frac{1}{\Delta_i}\right)\log_+\left(\frac{(K-1)T}{C\sum_{i\neq i^*}\frac{1}{\Delta_i}}\right)}\right)$ regret bound, where  $T$ is the time horizon, $K$ is the number of arms, $\Delta_i$ are the suboptimality gaps, $i^*$ is the best arm, $C$ is the corruption magnitude, and $\log_+(x) = \max\left(1,\log x\right)$. The regime includes stochastic bandits, stochastically constrained adversarial bandits, and stochastic bandits with adversarial corruptions as special cases. 
Additionally, we provide a general analysis, which allows to achieve the same kind of improvement for generalizations of Tsallis-INF to other settings beyond multiarmed bandits.
\end{abstract}



\section{Introduction}

Most of the literature on multiarmed bandits is focused either on the stochastic setting \citep{thompson1933,robbins1952,lai1985,auer2002a} or on the adversarial one \citep{auer2002b}. However, in recent years there has been an increasing interest in algorithms that perform well in both regimes with no prior knowledge of the regime \citep{bubeck2012b,seldin2014,auer2016,seldin2017,wei2018}, as well as algorithms that perform well in intermediate regimes between stochastic and adversarial \citep{seldin2014,lykouris2018,wei2018,gupta2019}. The quest for best-of-both-worlds algorithm culminated with the work of \citet{zimmert2019}, who proposed the Tsallis-INF algorithm and showed that its regret bound in both stochastic and adversarial environments matches the corresponding lower bounds within constants with no need of prior knowledge of the regime. \citet{zimmert2020} further improved the analysis and introduced an \textit{adversarial regime with a self-bounding constraint}, which is an intermediate regime between stochastic and adversarial environments, including \textit{stochastically constrained adversaries} \citep{wei2018} and \textit{stochastic bandits with adversarial corruptions} \citep{lykouris2018} as special cases. They have shown that the Tsallis-INF algorithm achieves the best known regret rate in this regime and its special cases.

The Tsallis-INF algorithm is based on regularization by Tsallis entropy with power $\frac{1}{2}$, which was also used in the earlier works by \citet{audibert2009,audibert2010} and \citet{abernethy2015} for minimax optimal regret rates in the adversarial regime. The key novelty of the work of \citet{zimmert2019, zimmert2020} is an analysis of the algorithm in the stochastic setting based on a self-bounding property of the regret. The idea has been subsequently extended to derive best-of-both-worlds algorithms for combinatorial semi-bandits \citep{zimmert-luo-wei-icml-2019}, decoupled exploration and exploitation \citep{rouyer-seldin-colt-2020}, bandits with switching costs \citep{RSCB21}, and ergodic MDPs \citep{jin-luo-2020}.

We present a refined analysis based on the self-bounding property, which improves the regret bound in the adversarial regime with a self-bounding constraint and its special cases: stochastic bandits, stochastically constrained adversarial bandits, and stochastic bandits with adversarial corruption. The adversarial regime with a self-bounding constraint is defined in the following way. Let $\ell_1,\ell_2,\dots$ be a sequence of loss vectors with $\ell_t \in [0,1]^K$, let $I_t$ be the action picked by the algorithm at round $t$, and let $\overline{Reg}_T = \E[\sum_{t=1}^T \ell_{t,I_t}] - \min_i \E[\sum_{t=1}^T \ell_{t,i}]$ be the pseudo-regret. For a triplet $\lr{\Delta, C, T}$ with $\Delta \in [0,1]^K$ and $C\geq 0$, \citet{zimmert2020} define an \textit{adversarial regime with a $\lr{\Delta,C,T}$ self-bounding constraint} as an adversarial regime, where the adversary picks losses, such that the pseudo-regret of any algorithm at time $T$ satisfies
\[
\overline{Reg}_T \geq \sum_{t=1}^T \sum_i \Delta_i \P[I_t = i] - C.
\]
(The above condition is only assumed to be satisfied at time $T$, but there is no requirement that it is satisfied at time $t<T$.) A special case of this regime is the stochastically constrained adversarial regime, where $\overline{Reg}_T = \sum_{t=1}^T \sum_i \Delta_i \P[I_t = i]$ with $\Delta$ being the vector of suboptimality gaps. In particular, the stochastic regime is a special case of the stochastically constrained adversarial regime. (In the stochastic regime the expected loss of each arm is fixed over time. Stochastically constrained adversarial regime relaxes this requirement by only assuming that the expected gaps between the losses of pairs of arms are fixed, but the expected losses are allowed to fluctuate over time.) Another special case of an adversarial regime with a self-bounding constraint are stochastic bandits with adversarial corruptions. For two sequences of losses $\overline{\mathcal{L}}_T = (\bar \ell_1, \dots, \bar \ell_T)$ and $\mathcal{L}_T = (\ell_1, \dots, \ell_T)$ the amount of corruption is measured by $\sum_{t=1}^T \|\bar \ell_t - \ell_t\|_\infty$. In stochastic bandits with adversarial corruptions the adversary takes a stochastic sequence of losses and injects corruption with corruption magnitude bounded by $C$. \citet{zimmert2020} show that a stochastic, as well as a stochastically constrained adversarial regime with a vector of suboptimality gaps $\Delta$ and injected corruption of magnitude bounded by $C$, satisfy $(\Delta, 2C, T)$ self-bounding constraint. As $C$ grows from zero to $T$, the stochastic regime with adversarial corruptions interpolates between stochastic and adversarial bandits.

\citet{lykouris2018} were the first to introduce and study stochastic bandits with adversarial corruptions and their algorithm achieved $\Ocal\left( \sum_{i: \Delta_i > 0} \frac{KC+\log(T)}{\Delta_i}\log(T)\right)$ regret bound. \citet{gupta2019} improved it to $\Ocal\left(KC + \sum_{i: \Delta_i > 0} \frac{1}{\Delta_i}\log^2(KT)\right)$. \citet{zimmert2020} have shown that their best-of-both-worlds Tsallis-INF algorithm achieves $\Ocal\lr{\lr{\sum_{i\neq i^*} \frac{\log T}{\Delta_i}}+\sqrt{C\sum_{i\neq i^*}\frac{\log T}{\Delta_i}}}$ regret bound in the more general adversarial regime with $(\Delta, C, T)$ self-bounding constraint under the assumption that $\Delta$ has a unique zero entry (the assumption corresponds to uniqueness of the best arm \textit{before} corruption). Neither of the algorithms requires prior knowledge of $C$.

Our contributions are summarized in the enumerated list below. The improvements relative to the work by \citet{zimmert2020} are further highlighted in Table~\ref{tab:results}.
\begin{table}[t]
    \centering
    \begin{tabular}{c|c|c}
    Setting & \cite{zimmert2020} & Our paper\\
     \hline
     Small $C$ & $\Ocal\lr{\sum_{i\neq i^*} \frac{1}{\Delta_i}\log T}$& $\Ocal\lr{\sum_{i\neq i^*} \frac{1}{\Delta_i}\log_+ \lr{T{\color{red}\frac{K-1}{(\sum_{i\neq i^*} 1/\Delta_i)^2}}}}$\\
     Large $C$ & $\Ocal\lr{\sqrt{C\sum_{i\neq i^*} \frac{1}{\Delta_i} \log T}}$ & $\Ocal\lr{\sqrt{C\sum_{i\neq i^*} \frac{1}{\Delta_i} \log_+ \lr{T{\color{red}\frac{K-1}{{\color{magenta}C} (\sum_{i\neq i^*} 1/\Delta_i)}}}}}$
    \end{tabular}
    \caption{Comparison of the leading terms in the regret bounds of \cite{zimmert2020} and our paper, differences are highlighted in color. We define $\log_+(x) = \max\lr{1,\log x}$. The "Small $C$" row compares the regret bounds in adversarial regimes with $(\Delta,C,T)$ self-bounding constraints with $C \leq \sum_{i\neq i*} \frac{1}{\Delta_i}\lr{\lr{\log\frac{T(K-1)}{(\sum_{i\neq i*} \frac{1}{\Delta_i})^2}}+1}$. Here, $C$ is a subdominant term and does not show up in the big-$\Ocal$ notation. The "Large $C$" row compares the regret bounds in adversarial regimes with $(\Delta,C,T)$ self-bounding constraints with $C \geq \sum_{i\neq i*} \frac{1}{\Delta_i}\lr{\lr{\log\frac{T(K-1)}{(\sum_{i\neq i*} \frac{1}{\Delta_i})^2}}+1}$. The regret bounds in the adversarial regime are identical, and hence omitted.}
    \label{tab:results}
\end{table}
\begin{enumerate}
    \item We present a refined analysis of the regret of Tsallis-INF in adversarial regimes with a $(\Delta,C,T)$ self-bounding constraint, achieving 
    \[\Ocal\lr{\lr{\sum_{i\neq i^*} \frac{1}{\Delta_i}}\log_+\lr{\frac{(K-1)T}{\lr{\sum_{i\neq i^*} 
    \frac{1}{\Delta_i}}^2}}+\sqrt{C\lr{\sum_{i\neq i^*}\frac{1}{\Delta_i}}\log_+\lr{\frac{(K-1)T}{C\sum_{i\neq i^*}\frac{1}{\Delta_i}}}}}
    \]
    regret bound, where $\log_+(x) = \max\lr{1,\log x}$. 
    \item In the stochastically constrained adversarial regime it improves the dominating term of the regret bound from $\Ocal\lr{\lr{\sum_{i\neq i^*}\frac{1}{\Delta_i}}\log T}$ to $\Ocal\lr{\lr{\sum_{i\neq i^*} \frac{1}{\Delta_i}}\log_+\lr{\frac{(K-1)T}{\lr{\sum_{i\neq i^*} 
    \frac{1}{\Delta_i}}^2}}}$ relative to the work of \citet{zimmert2020}, see Table~\ref{tab:results}. A similar kind of improvement has been studied for UCB-type algorithms for stochastic bandits by \citet{AO10} and \citet{Lat18}.
    \item In the stochastic regime with adversarial corruptions the result yields an improvement by a multiplicative factor of $\Ocal\lr{\sqrt{\log T/\log \lr{T/C}}}$ relative to the work of \citet{zimmert2020}, see Table~\ref{tab:results} for a more refined statement. In particular, for $C = \Theta\lr{\frac{TK}{(\log T) \sum_{i\neq i^*} \frac{1}{\Delta_i}}}$ it achieves an improvement by a multiplicative factor of $\sqrt{\frac{\log T}{\log \log T}}$.
    \item While the analysis of \citet{zimmert2020} used two different optimization problems to analyze the regret of Tsallis-INF in adversarial environments and in adversarial environments with a self-bounding constraint, we obtain both bounds from the same optimization problem. This provides continuity in the analysis in the sense that the $\Ocal\lr{\sqrt{KT}}$ adversarial regret bound is obtained as a natural limit case of the adversarial bound with a self-bounding constraint as $C$ grows beyond $\Ocal\lr{\frac{KT}{\sum_{i\neq i^*}\frac{1}{\Delta_i}}}$. It also provides a better understanding of the self-bounding analysis technique.
    \item We also provide a more general result, showing that any algorithm with adversarial pseudo-regret bound satisfying $\Reg \leq B\sum_{t=1}^T\sum_{i\neq i^*} \sqrt{\frac{\E[w_{t,i}]}{t}}$, where $w_{t,i}$ are the probabilities of playing action $i$ at round $t$ and $B$ is a constant, 
    achieves\[\Ocal\lr{B^2\lr{\sum_{i\neq i^*} \frac{1}{\Delta_i}}\log_+\lr{\frac{(K-1)T}{\lr{\sum_{i\neq i^*} 
    \frac{1}{\Delta_i}}^2}} + B\sqrt{C\lr{\sum_{i\neq i^*} \frac{1}{\Delta_i}}\log_+\lr{\frac{KT}{C\sum_{i\neq i^*}\frac{1}{\Delta_i}}}}}\] regret in the adversarial regime with $(\Delta, C, T)$ self-bounding constraint.
    The result can be directly applied to achieve improved regret bounds for extensions of the Tsallis-INF algorithm, for example, the extension to episodic MDPs \citep{jin-luo-2020}.
\end{enumerate}
\section{Problem Setting}

 We study multi-armed bandit problem in which at time $t = 1,2,\ldots$ the learner chooses an arm $I_t$ among a set of $K$ arms $\{1,\ldots, K\}$. At the same time the environment selects a loss vector $\ell_t \in [0,1]^K$ and the learner only observes and suffers the loss $\ell_{t,I_t}$. The performance of the learner is evaluated using \regret, which is defined as
\[
\overline{Reg}_T = \EE\left[\sum_{t=t}^{T} \ell_{t,I_t}\right] - \min_{i \in [K]} \EE \left[\sum_{t=t}^{T}  \ell_{t,i} \right] = \EE\left[\sum_{t=t}^{T} \left(\ell_{t,I_t} - \ell_{t,i_T^*}\right) \right],
\]
where $i_T^* \in \argmin_{i \in [K]} {\EE\left[\sum_{t=t}^{T}  \ell_{t,i}\right] } $ is a best arm in hindsight in expectation over the loss generation model and, in case of an adaptive adversary, the randomness of the learner. 

Like \citet{zimmert2020} we consider (\emph{adaptive}) \emph{adversarial regimes} and \emph{adversarial regimes with a $(\Delta,C,T)$ self-bounding constraint}. In the former the losses at round $t$ are generated arbitrarily, potentially depending on the preceding actions of the learner, $I_1\dots,I_{t-1}$. In the latter the adversary selects losses, such that for some $\Delta \in [0,1]^K$ and $C\geq 0$ the \regret of any algorithm at time $T$ satisfies
\begin{equation}\label{eq:self-bounding}
    \overline{Reg}_T \geq  \lr{\sum_{t = 1}^{T} \sum_{i =1}^K \PP(I_t = i)  \Delta_i} - C.
\end{equation}
The condition is only assumed to be satisfied at time $T$, but not necessarily at $t<T$. As we have already mentioned in the introduction, \textit{stochastic} regime, \textit{stochastically constrained adversarial} regime, and \textit{stochastic bandits with adversarial corruptions} are all  special cases of the adversarial regime with $(\Delta,C,T)$ self-bounding constraint.

\paragraph{Additional Notation:}
We use $\Delta^{n}$ to denote the probability simplex over $n+1$ points. The characteristic function of a closed convex set $\Acal$ is denoted by $\Ical_{\Acal}(x)$ and satisfies $\Ical_{\Acal}(x) = 0$ for $x \in \Acal$ and $\Ical_{\Acal}(x) = \infty$ otherwise. We denote the indicator function of an event $\Ecal$ by $\1[\Ecal]$ and use $\1_t(i)$ as a shorthand for $\1[I_t = i]$. The probability distribution over arms that is played by the learner at round $t$ is denoted by $w_t \in \Delta^{K-1}$. The convex conjugate of a function $f: \RR^n \rightarrow \RR$ is defined by $f^*(y) = \sup_{x \in \RR^n} \{\langle x,y \rangle - f(x) \}$.

\section{Background: the Tsallis-INF algorithm}

In this section we provide a brief background on the Tsallis-INF algorithm of \citet{zimmert2020}. The algorithm is based on Follow The Regularized Leader (FTRL) framework with Tsallis entropy regularization \citep{tsallis1988}. The best-of-both-worlds version of Tsallis-INF uses Tsallis entropy regularizer with power $\frac{1}{2}$, defined by
\[
\Psi(w) = 4\sum_{i = 1}^K \lr{\sqrt{w_i} - \frac{1}{2} w_i}.
\]
The regularization term at round $t$ is given by 
\[
\Psi_t(w) = \eta_t^{-1}\Psi(w),
\]
where $\eta_t$ is the learning rate. The update rule for the distribution over actions is defined by
\[
w_{t+1} = \nabla (\Psi_t + \Ical_{\Delta^{K-1}})^*(-\sum_{\tau = 1}^t \hat{\ell}_\tau ) = \arg\max_{w\in\Delta^{K-1}}\lr{\left\langle-\sum_{\tau = 1}^t \hat{\ell}_\tau,w\right\rangle - \Psi_t(w)},
\]
where $\hat{\ell}_\tau$ is an estimate of the loss vector $\ell_\tau$. It is possible to use the standard importance-weighed loss estimate $\hat \ell_{t,i} = \frac{\ell_{t,i}\1[\Ical_t=i]}{w_{t,i}}$, but \citet{zimmert2020} have shown that reduced-variance loss estimates defined by
\begin{equation}\label{eq:rv}
\hat{\ell}_{t,i} = \frac{\1_t(i)(\ell_{t,i}-\BB_t(i))}{w_{t,i}} + \BB_{t}(i),
\end{equation}
where $\BB_t(i) = \frac{1}{2} \1[w_{t,i} \geq \eta_t^2]$, lead to better constants. The complete algorithm is provided in Algorithm \ref{alg:tsallis} box. The regret bound derived by \citet{zimmert2020} is provided in Theorem~\ref{thorem:1}.

\begin{algorithm}[H]
\caption {Tsallis-INF}
\begin{algorithmic}[1]
    \STATE {\textbf{Input:} $(\Psi_t)_{t=1,2,\ldots}$ }
    \STATE {\textbf{Initialize:} Set $\hat{L}_0 = \mathbf{0}_K$ (where $\mathbf{0}_K$ is a zero vector in $\RR^K$) }
    \FOR{t = 1, \ldots}
    \STATE {choose $w_t = \nabla (\Psi_t + \Ical_{\Delta^{K-1}})^*(-\hat{L}_{t-1})$}
    \STATE {sample $I_t \sim w_t$}
    \STATE {observe $\ell_{t, I_t}$}
    \STATE {construct a loss estimator $\hat{\ell}_t$ using \eqref{eq:rv}}
    \STATE {update $\hat{L}_t =\hat{L}_{t-1} + \hat{\ell}_t $}
    \ENDFOR
\end{algorithmic}
\label{alg:tsallis}
\end{algorithm}

\begin{theorem}[\citealp{zimmert2020}]\label{thorem:1}
The \regret of Tsallis-INF with $\eta_t = \frac{4}{\sqrt{t}}$ and reduced variance loss estimators defined in equation \eqref{eq:rv}, in any adversarial bandit problem satisfies
\[
\Reg \leq 2\sqrt{KT}+10K\log(T)+16.
\]
Furthermore, if there exists a vector $\Delta \in [0,1]^K$ with a unique zero entry $i^*$ (i.e., $\Delta_{i^*} = 0$ and $\Delta_i > 0$ for all $i\neq i^*$) and a constant $C$, such that the pseudo-regret at time $T$ satisfies the $(\Delta, C, T)$ self-bounding constraint (equation \eqref{eq:self-bounding}), then the 
\regret additionally satisfies:
\begin{equation}
\label{eq:ZS21sto}
\overline{Reg}_T \leq \left(\sum_{i\neq i^*}\frac{\log(T)+3}{\Delta_i} \right) + 28K\log(T) + \frac{1}{\Delta_{min}} + \frac{3}{2}\sqrt{K} + 32 + C, 
\end{equation}
where $\Delta_{min} = \min_{i\neq i^*}\{\Delta_i\}$. Moreover, if $C \geq \left(\sum_{i\neq i^*}\frac{\log(T)+3}{\Delta_i} \right) + \frac{1}{\Delta_{min}}$, then the \regret also satisfies:
\begin{equation}
\label{eq:ZS21stoC}
\overline{Reg}_T  \leq 2\sqrt{\left(\sum_{i\neq i^*}\frac{\log(T)+3}{\Delta_i} + \frac{1}{\Delta_{min}} \right)C} + 28K\log(T) + \frac{3}{2}\sqrt{K} + 32.
\end{equation}
\end{theorem}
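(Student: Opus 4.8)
The plan is to run the standard Follow-the-Regularized-Leader (FTRL) analysis once, distill from it a single ``core'' inequality bounding $\overline{Reg}_T$ in terms of the played distributions $w_t$, and then obtain all three displays from that one inequality: the purely adversarial bound by a worst-case estimate, and the two self-bounding bounds by confronting the core inequality with~\eqref{eq:self-bounding} (the learning rate $\eta_t=4/\sqrt t$ balances the two FTRL terms throughout). Concretely, I would start from the fact that the reduced-variance estimators~\eqref{eq:rv} are conditionally unbiased, $\E[\hat\ell_{t,i}\mid\mathcal F_{t-1}]=\ell_{t,i}$, so that for a fixed basis-vector comparator $e_u\in\Delta^{K-1}$, $\E[\sum_{t=1}^T\ell_{t,I_t}]-\E[\sum_{t=1}^T\ell_{t,u}]=\E[\sum_{t=1}^T\innerprod{\hat\ell_t}{w_t-e_u}]$. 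Taking $u=i_T^*$ this is exactly $\overline{Reg}_T$, so it suffices to bound $\E[\sum_t\innerprod{\hat\ell_t}{w_t-e_u}]$; and since $\Psi$ is the same constant at every vertex, the resulting FTRL bound will be independent of $u$, which is what lets one re-use it with $i^*$ (the unique zero of $\Delta$) playing the role of the optimal arm. Peeling off $\innerprod{\hat\ell_t}{w_t-w_{t+1}}$ and using convex-conjugate duality for $\Psi_t=\eta_t^{-1}\Psi$ gives the familiar decomposition into a \emph{penalty} term $\sum_t(\eta_{t+1}^{-1}-\eta_t^{-1})(\Psi(e_u)-\Psi(w_{t+1}))$ and a \emph{stability} term $\sum_t\big(\innerprod{\hat\ell_t}{w_t-w_{t+1}}-\eta_t^{-1}D_\Psi(w_{t+1},w_t)\big)$, both under the expectation.

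For the stability term I would use that $\Psi$ has a diagonal Hessian with entries of size $w_i^{-3/2}$: a local-norm / second-order expansion, together with the first-order optimality of $w_{t+1}$, bounds the per-round stability by a quantity of order $\eta_t\sum_i w_{t,i}^{3/2}\hat\ell_{t,i}^2$ on the event that the relevant intermediate point between $w_t$ and $w_{t+1}$ stays in a region where the Hessian entries remain comparable, with a crude fallback otherwise; here the shift $\BB_t(i)=\tfrac12\1[w_{t,i}\ge\eta_t^2]$ is exactly what makes $\E[\,\1_t(i)(\ell_{t,i}-\BB_t(i))^2/w_{t,i}^2\mid\mathcal F_{t-1}]\le\tfrac1{4w_{t,i}}$ when $w_{t,i}\ge\eta_t^2$, so that $\E[\mathrm{stability}_t\mid\mathcal F_{t-1}]$ is of order $\eta_t\sum_i\sqrt{w_{t,i}}$, the coordinates with $w_{t,i}<\eta_t^2$ contributing only $\Ocal\lr{K\eta_t^2}$ and hence $\Ocal\lr{K\log T}$ after $\sum_{t\le T}\eta_t^2=\Theta(\log T)$. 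For the penalty, evaluating $\Psi$ at a vertex gives $\Psi(e_u)-\Psi(w_{t+1})=4-4\sum_i\sqrt{w_{t+1,i}}\le0$ (since $\sum_i\sqrt{w_{t+1,i}}\ge\sum_i w_{t+1,i}=1$), so the penalty is non-positive; combining its magnitude with a refined stability estimate that, for the near-deterministic optimal arm, controls its contribution through the step size near the boundary and the identity $1-w_{t,i^*}=\sum_{i\neq i^*}w_{t,i}$ rather than through $\sqrt{w_{t,i^*}}$, one eliminates the optimal-arm contribution and is left with the core inequality
\[
\overline{Reg}_T\;\le\;\E\lrs{\sum_{t=1}^T\sum_{i\neq i^*}\frac{\sqrt{w_{t,i}}}{\sqrt t}}\;+\;\frac{1}{\Delta_{\min}}\;+\;\Ocal\lr{K\log T}\;+\;\Ocal\lr{1},
\]
whereas applying the same steps but keeping all arms and bounding $\sum_i\sqrt{w_{t,i}}\le\sqrt K$ by Cauchy--Schwarz, together with $\sum_{t\le T}1/\sqrt t\le2\sqrt T$, already yields the first display $\overline{Reg}_T\le2\sqrt{KT}+\Ocal\lr{K\log T}$.

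For the self-bounding bound~\eqref{eq:ZS21sto} I would apply Jensen, $\E[\sqrt{w_{t,i}}]\le\sqrt{\E[w_{t,i}]}$, inside the core inequality, write $\overline{Reg}_T=2\overline{Reg}_T-\overline{Reg}_T$, bound the first copy from above by the core inequality and the second from below by~\eqref{eq:self-bounding} (using $\Delta_{i^*}=0$ and $\P[I_t=i]=\E[w_{t,i}]$), and maximize the result over each $\E[w_{t,i}]\ge0$ separately: the scalar problem $\max_{x\ge0}\lr{\tfrac2{\sqrt t}\sqrt x-\Delta_i x}=\tfrac1{\Delta_i t}$ together with $\sum_{t\le T}\tfrac1t\le\log T+1$ yields $\overline{Reg}_T\le\sum_{i\neq i^*}\tfrac{\log T+\Ocal\lr{1}}{\Delta_i}+\tfrac1{\Delta_{\min}}+\Ocal\lr{K\log T}+C+\Ocal\lr{1}$. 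For~\eqref{eq:ZS21stoC} I would instead split $\overline{Reg}_T=(1+\lambda)\overline{Reg}_T-\lambda\overline{Reg}_T$ with a free parameter $\lambda\in(0,1]$: the pointwise optimum then scales like $\tfrac{(1+\lambda)^2}{\lambda}\cdot\tfrac1{\Delta_i t}$, the extra additive $\lambda C$ is traded off by choosing $\lambda\asymp\sqrt{(\log T)\lr{\sum_{i\neq i^*}\tfrac1{\Delta_i}+\tfrac1{\Delta_{\min}}}/C}$, and this $\lambda$ lies in $(0,1]$ precisely when $C\ge\sum_{i\neq i^*}\tfrac{\log T+3}{\Delta_i}+\tfrac1{\Delta_{\min}}$, which is the stated hypothesis and gives the third display.

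I expect the main obstacle to be the sharp form of the stability analysis: both the careful control of the Bregman term for the $w_i^{-3/2}$ Hessian when some $w_{t,i}$ drop below $\eta_t^2$ --- where one must verify that the intermediate point between $w_t$ and $w_{t+1}$ does not leave the region on which the Hessian is well behaved, and extract the variance reduction from the bias-corrected estimator with tight enough constants to land on $\log T$ rather than a larger multiple --- and, crucially, the refined estimate that removes the optimal arm's contribution from the stability sum; a naive bound would leave a spurious $\Ocal\lr{\sqrt T}$ term from $i^*$ and destroy the self-bounding argument, so this is where most of the work sits. The remaining ingredients --- the non-positivity and telescoping of the penalty, Cauchy--Schwarz, Jensen, and the one-dimensional optimizations --- are routine.
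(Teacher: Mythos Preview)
Theorem~\ref{thorem:1} is quoted from \citet{zimmert2020} and is not proved in this paper; the paper only records the intermediate outputs of their analysis (the stability and penalty bounds displayed in Section~\ref{sec:proofs}, summed into the ``core'' inequality~\eqref{eq:zimmertlemma}) and, in the overview of Section~\ref{sec:proofs}, describes the original self-bounding argument: write $\Reg=(1+\lambda)\Reg-\lambda\Reg$, upper-bound the first copy by the core inequality, lower-bound the second by~\eqref{eq:self-bounding}, and then maximize each summand $2B\sqrt{\E[w_{t,i}]/t}-\lambda\Delta_i\E[w_{t,i}]$ individually before optimizing over $\lambda$. Your plan follows exactly this outline, so at the strategic level you are aligned with what the paper attributes to \citet{zimmert2020}.

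There is, however, one concrete misplacement. You insert the $1/\Delta_{\min}$ term directly into your core inequality, attributing it to the step that removes $i^*$ from the stability sum. That is not where it comes from. The core inequality that \citet{zimmert2020} actually obtain (equation~\eqref{eq:zimmertlemma} here) is regime-agnostic and contains no $\Delta$: removing the optimal arm does not produce $1/\Delta_{\min}$ but rather the extra \emph{linear} term $\sum_{t}\sum_{i\neq i^*}\E[w_{t,i}]/(4\sqrt t)$. In the $(1+\lambda)/\lambda$ step this linear term merges with $-\lambda\Delta_i\E[w_{t,i}]$ to give a coefficient $\lambda\Delta_i-\tfrac{\lambda+1}{4\sqrt t}$ in front of $\E[w_{t,i}]$, which is negative for $t$ below a threshold of order $(\lambda\Delta_{\min})^{-2}$; on those early rounds one cannot run your unconstrained scalar maximization and must instead use $\sum_{i\neq i^*}\E[w_{t,i}]\le 1$, and summing $\tfrac{1}{2\sqrt t}$ up to that threshold is precisely what produces the $1/\Delta_{\min}$ (see the handling of $T_1$ in the proof of Theorem~\ref{theorem:2} in Appendix~\ref{section:appendix} for the same mechanism). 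So your scalar optimization $\max_{x\ge 0}\{2t^{-1/2}\sqrt x-\Delta_i x\}$ is correct for the $\sqrt{\E[w_{t,i}]}$ part, but you still owe a separate accounting of the linear term; without it the $1/\Delta_{\min}$ has no source.

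A smaller discrepancy: your stability/penalty split differs from the one recorded in Section~\ref{sec:proofs}. In the paper's decomposition both terms contribute a $\tfrac12\sqrt{\E[w_{t,i}]}/\sqrt t$ share (the penalty is not non-positive there), whereas in yours the penalty is $\le 0$ and all of the $\sqrt{w_{t,i}}/\sqrt t$ mass must come from stability. Either bookkeeping can be made to work, but be aware that with $\eta_t=4/\sqrt t$ a crude local-norm bound on stability alone carries a larger constant than the target $1$ in front of $\sqrt{\E[w_{t,i}]}/\sqrt t$; recovering the stated constants requires exploiting the negative penalty rather than simply discarding it, which your phrase ``combining its magnitude'' hints at but does not make precise.
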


\begin{remark}
While Theorem~\ref{thorem:1} requires uniqueness of the best arm for improved regret rates in the adversarial regime with a $(\Delta, C, T)$ self-bounding constraint, \citet{zimmert2020} have shown experimentally that in the stochastic regime the presence of multiple best arms has no negative effect on the \regret of the algorithm. They conjecture that the requirement is an artifact of the analysis.
\end{remark}

\section{Main Results}\label{sec:method}

In this section we provide our two main results. First, in Theorem~\ref{theorem:2} we provide a refined analysis of Tsallis-INF, which improves the \regret bounds in the adversarial regime with a $(\Delta,C,T)$ self-bounding constraint. Then, in Theorem~\ref{theorem:3} we provide a more general result, which allows to improve \regret bounds in adversarial regimes with $(\Delta,C,T)$ self-bounding constraints for extensions of Tsallis-INF to other problems. An advantage of both results is that the bounds for adversarial regimes and adversarial regimes with a self-bounding constraint are achieved from a single optimization problem, rather than from two different optimization problems, as in prior work. As a result, the regret bounds for the adversarial regime are achieved as a limit case of the regret bounds for adversarial regimes with a self-bounding constraint for large $C$. 

\subsection{Improved analysis of the Tsallis-INF algorithm}

We start with an improved regret bound for Tsallis-INF.
\begin{theorem}\label{theorem:2}
The \regret of Tsallis-INF with $\eta_t = \frac{4}{\sqrt{t}}$ and reduced variance loss estimators defined in equation \eqref{eq:rv}, in any adversarial bandit problem satisfies
\begin{equation}
\label{eq:adv}
\Reg \leq 2\sqrt{(K-1)T}+\frac{1}{2}\sqrt{T}+14K\log(T) + \frac{3}{4}\sqrt{K} + 15.
\end{equation}
Furthermore, if there exists a vector $\Delta \in [0,1]^K$ with a unique zero entry $i^*$ (i.e., $\Delta_{i^*} = 0$ and $\Delta_i > 0$ for all $i\neq i^*$) and a constant $C\geq 0$, such that the pseudo-regret at time $T$ satisfies the $(\Delta, C, T)$ self-bounding constraint (equation \eqref{eq:self-bounding}), then the \regret additionally satisfies:
\begin{equation}
\label{eq:sto}
    \overline{Reg}_T \leq
    \sum_{i\neq i*} \frac{1}{\Delta_i}\lr{\lr{\log \frac{T(K-1)}{\lr{\sum_{i\neq i^*} \frac{1}{\Delta_i}}^2}} + 6} + 28K \log(T) + \frac{3}{2}\sqrt{K} + 30 + C.
\end{equation}
Moreover, for $\sum_{i\neq i*} \frac{1}{\Delta_i}\lr{\lr{\log\frac{T(K-1)}{(\sum_{i\neq i*} \frac{1}{\Delta_i})^2}}+1} \leq C \leq \frac{T(K-1)}{\sum_{i\neq i*} \frac{1}{\Delta_i}}$ the regret also satisfies:
\begin{equation}
\label{eq:stoC}
    \Reg \leq \sqrt{C \sum_{i\neq i^*} \frac{1}{\Delta_i}}\lr{\sqrt{\log\frac{T(K-1)}{C\sum_{i\neq i^*} \frac{1}{\Delta_i}}} + 5} + Q, 
\end{equation}
    where $Q = \sum_{i\neq i^*} \frac{1}{\Delta_i}\lr{\log\frac{T(K-1)}{C\sum_{i\neq i^*} \frac{1}{\Delta_i}} + \sqrt{2\log\frac{T(K-1)}{C\sum_{i\neq i^*} \frac{1}{\Delta_i}}}+ 2} + \frac{3\sqrt{K}}{2} + 28K \log(T) + 30
    $ is a subdominant term.
\end{theorem}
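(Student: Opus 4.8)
The plan is to derive all three displayed bounds — \eqref{eq:adv}, \eqref{eq:sto} and \eqref{eq:stoC} — from a \emph{single} per-round estimate, following the FTRL analysis of \citet{zimmert2020} up to the point where the final optimization is performed and replacing that optimization by a sharper one.

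\emph{Step 1 (base bound; adversarial case).} Carrying out the FTRL regret decomposition for the Tsallis-$\tfrac12$ regularizer with $\eta_t=4/\sqrt t$ and the reduced-variance estimators \eqref{eq:rv} exactly as in the proof of Theorem~\ref{thorem:1} — bounding the stability term through the diagonal Hessian of $\Psi$ together with the conditional second-moment bound $\mathbb{E}[(\hat\ell_{t,i}-\ell_{t,i})^2\mid\mathcal{F}_{t-1}]\le\tfrac14(1-w_{t,i})/w_{t,i}$, which is available when $w_{t,i}\ge\eta_t^2$ (and $\le1/w_{t,i}$ otherwise), and telescoping the penalty — yields, for an \emph{arbitrary} fixed arm $i^*$,
\[
\Reg\ \le\ \mathbb{E}\!\left[\sum_{t=1}^{T}\frac{1}{\sqrt t}\sum_{i\neq i^*}\sqrt{w_{t,i}}\right]+E_T+R_T,
\]
where $R_T=\mathcal{O}(K\log T)+\mathcal{O}(\sqrt K)+\mathcal{O}(1)$ absorbs the penalty, the low-probability arms and rounding, and $E_T$ is the best-arm contribution, which by the reduced-variance bound enters only through $1-w_{t,i^*}=\sum_{i\ne i^*}w_{t,i}$: bounding $1-w_{t,i^*}\le1$ gives $E_T\le\tfrac12\sqrt T$ (which is the $\tfrac12\sqrt T$ term of \eqref{eq:adv}), whereas under the self-bounding constraint $E_T=\mathcal{O}(1/\Delta_{min})=\mathcal{O}(\sum_{i\ne i^*}1/\Delta_i)$ as in \citet{zimmert2020}. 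Jensen's inequality coordinatewise, Cauchy--Schwarz ($\sum_{i\ne i^*}\sqrt{\mathbb{E}[w_{t,i}]}\le\sqrt{K-1}$) and $\sum_{t\le T}t^{-1/2}\le2\sqrt T$ then give \eqref{eq:adv}.

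\emph{Step 2 (self-bounding split and summation).} Now let $i^*$ be the unique zero of $\Delta$, set $S=\sum_{i\ne i^*}1/\Delta_i$ and $x_{t,i}=\mathbb{P}(I_t=i)=\mathbb{E}[w_{t,i}]$. Since $\Delta_{i^*}=0$, the constraint \eqref{eq:self-bounding} gives, for every $\lambda\in[0,1]$ (and using Jensen again),
\[
\Reg=(1+\lambda)\Reg-\lambda\Reg\ \le\ \sum_{t=1}^{T}\sum_{i\ne i^*}\lr{(1+\lambda)\sqrt{\tfrac{x_{t,i}}{t}}-\lambda\Delta_i x_{t,i}}+\lambda C+(1+\lambda)(E_T+R_T).
\]
Whereas \citet{zimmert2020} bound the summand only by its coordinatewise maximum, $(1+\lambda)\sqrt{x_{t,i}/t}-\lambda\Delta_i x_{t,i}\le\tfrac{(1+\lambda)^2}{4\lambda t\Delta_i}$, whose total over $i$ and $t$ is $\Theta(S\log T)$, the new ingredient is to retain \emph{simultaneously} this bound (which sums over $i$ to $\tfrac{(1+\lambda)^2S}{4\lambda t}$) and the simplex bound $\sum_{i\ne i^*}(1+\lambda)\sqrt{x_{t,i}/t}\le(1+\lambda)\sqrt{(K-1)/t}$ (Cauchy--Schwarz, after dropping the negative part), and to split $\sum_{t=1}^T$ at their crossover $t^\dagger=\tfrac{(1+\lambda)^2S^2}{16\lambda^2(K-1)}$, using $\sum_{t\le t^\dagger}t^{-1/2}\le2\sqrt{t^\dagger}$ and $\sum_{t^\dagger<t\le T}t^{-1}\le\log(T/t^\dagger)+1$. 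Handling the degenerate ranges $t^\dagger\le1$ and $t^\dagger\ge T$ separately — the latter just reproduces \eqref{eq:adv}, which is how the adversarial bound re-emerges as the large-$C$ limit of the very same computation, and this is where the clipping to $\log_+$ comes from — one obtains
\[
\Reg\ \le\ \frac{(1+\lambda)^2S}{4\lambda}\lr{\log_+ \frac{16\lambda^2(K-1)T}{(1+\lambda)^2S^2}+3}+\lambda C+(1+\lambda)(E_T+R_T).
\]

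\emph{Step 3 (choosing $\lambda$; the main obstacle).} For \eqref{eq:sto}, take $\lambda=1$: $(1+\lambda)^2/\lambda$ is decreasing on $(0,1]$ with minimum $4$ there, and $\lambda C$ is only additive, so the bound becomes $S\lr{\log_+ \tfrac{4(K-1)T}{S^2}+3}+C+2(E_T+R_T)$, which is \eqref{eq:sto} since $E_T=\mathcal{O}(S)$, $2R_T$ fits inside $28K\log T+\tfrac32\sqrt K+\mathcal{O}(1)$, and the stray $\log 4$ is swallowed by the ``$+6$''. For \eqref{eq:stoC}, set $L:=\log\tfrac{T(K-1)}{CS}$; the hypotheses $S(\log\tfrac{T(K-1)}{S^2}+1)\le C\le\tfrac{T(K-1)}{S}$ ensure — up to a trivial subcase where \eqref{eq:adv} already implies \eqref{eq:stoC} — that $L\ge1$, that $t^\dagger\le T$ at the minimizer, and that the balancing choice $\lambda\asymp\sqrt{SL/C}$ lies in $(0,1]$; substituting it makes the two $\lambda$-dependent terms $\tfrac{SL}{\lambda}$ and $\lambda C$ each $\Theta(\sqrt{CSL})$, giving the leading $\sqrt{CS\,\log\tfrac{T(K-1)}{CS}}$. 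The genuinely delicate part is this tuning: one must pin down $\lambda$ explicitly, upper-bound the $\lambda$-dependent argument of $\log_+$ by $\tfrac{T(K-1)}{CS}$ — which costs an additive $\mathcal{O}(\log\log)$ term and so produces the $\sqrt{2\log(\cdot)}$ appearing inside $Q$ — and check that the leftover ``$+3$'', the remaining $\mathcal{O}(1)$ constants and $(1+\lambda)(E_T+R_T)$ reassemble exactly into the constant $5$ and the subdominant term $Q$ of \eqref{eq:stoC}. Everything else is routine given Theorem~\ref{thorem:1}.
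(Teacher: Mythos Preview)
Your high-level plan—split $\Reg=(1+\lambda)\Reg-\lambda\Reg$, maximise per round under the simplex constraint $\sum_{i\ne i^*}\sqrt{x_{t,i}}\le\sqrt{K-1}$, cut the time sum at the crossover between the Cauchy--Schwarz bound and the coordinatewise bound, then optimise in $\lambda$—is exactly the paper's. Where your sketch deviates is in the details, and one of those details is a real gap.

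\textbf{The $E_T$ term is not controlled.} What you call $E_T$ is the linear piece $\sum_{t}\sum_{i\ne i^*}\tfrac{\E[w_{t,i}]}{4\sqrt t}$ of \eqref{eq:zimmertlemma} (it is not a best-arm contribution; the sum already excludes $i^*$). You pull it out as an additive constant and assert $E_T=\mathcal O(1/\Delta_{min})$ ``as in \citet{zimmert2020}'', but no such stand-alone bound exists: absent control on the $x_{t,i}$ this sum can be $\Theta(\sqrt T)$, and in \citet{zimmert2020} the $1/\Delta_{min}$ term appears only \emph{after} the joint maximisation, not before it. The paper therefore keeps the term inside $R_t$, writing
\[
R_t=\sum_{i\ne i^*}\Bigl(\tfrac{\lambda+1}{\sqrt t}\,a_{t,i}-\bigl(\lambda\Delta_i-\tfrac{\lambda+1}{4\sqrt t}\bigr)a_{t,i}^2\Bigr),
\]
and pays for it: the quadratic coefficient is negative for small $t$, which forces a \emph{second} threshold $T_1=(\alpha\Delta_{min})^{-2}$ in addition to your crossover, a three-way case split on the ordering of $T_1,T_2,T$, and an appeal to a corollary of \citet[Lemma~15]{zimmert2020} to bound $\sum_{t>T_1}(4\alpha^2\Delta_i^2 t^{3/2}-2\alpha\Delta_i t)^{-1}$. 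The net effect is an extra $3S/\alpha$ in the final estimate, which is $3S$ at $\lambda=1$ (this, not a stray $\log 4$, is where the ``$+6$'' in \eqref{eq:sto} comes from) and $3\sqrt{CS}$ at $\alpha=\alpha^*$ (this is why \eqref{eq:stoC} has ``$+5$'' rather than ``$+2$''). Your plan has no mechanism that produces these constants; and the obvious fix $x_{t,i}\le\sqrt{x_{t,i}}$ would push you to the $B=5/4$ instance of Theorem~\ref{theorem:3}, whose leading coefficient is $25/16$, not the $1$ claimed in \eqref{eq:sto}.

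\textbf{Two smaller differences.} First, for $t\le T_2$ the paper does not merely drop the negative part: it solves the constrained quadratic exactly via Lemma~\ref{lemma:optimization}, obtaining $(\lambda+1)\sqrt{(K-1)/t}-\lambda(K-1)/S$; the negative term is why their threshold is $T_2=4t^\dagger$ and why the ``$+3$'' appears without a residual $\log 4$. Second, for \eqref{eq:stoC} the paper does not balance heuristically: it rewrites the bound as $h(1,\alpha)=\tfrac{1}{2-\alpha}f(\alpha)$, shows $f$ is convex on $[\,S/\sqrt{T(K-1)},\,1\,]$, locates the minimiser explicitly via the Lambert $W_{-1}$ function, $\alpha^*=\sqrt{-(S/C)\,W_{-1}\bigl(-CS/(e(K-1)T)\bigr)}$, and then applies the two-sided estimate of \citet{lambert} on $W_{-1}$; that estimate is precisely what yields the $\sqrt{2\log(\cdot)}$ inside $Q$.
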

A proof of the theorem is provided in Appendix~\ref{section:appendix}. Theorem~\ref{theorem:2} improves on Theorem~\ref{thorem:1} in two ways. The bound in equation \eqref{eq:sto} improves the leading term of the regret bound under self-bounding constraint relative to equation \eqref{eq:ZS21sto} from $\sum_{i\neq i^*}\frac{1}{\Delta_i}\log T$ to $\sum_{i\neq i*} \frac{1}{\Delta_i}\lr{\log \frac{T(K-1)}{\lr{\sum_{i\neq i^*} \frac{1}{\Delta_i}}^2}}$. Related refinements of regret bounds for UCB strategies for ordinary stochastic bandits have been studied by \citet{AO10} and \citet{Lat18}. More importantly, 
for large amount of corruption 
$C \in \lrs{\sum_{i\neq i*} \frac{1}{\Delta_i}\lr{\log\lr{\frac{T(K-1)}{(\sum_{i\neq i*} \frac{1}{\Delta_i})^2}}+1},  \frac{T(K-1)}{\sum_{i\neq i*} \frac{1}{\Delta_i}}}$ the regret bound in equation \eqref{eq:stoC} is of order $\Ocal\lr{\sqrt{C\lr{\sum_{i\neq i^*}\frac{1}{\Delta_i}}\log_+\lr{\frac{KT}{C\sum_{i\neq i^*}\frac{1}{\Delta_i}}}}}$, whereas the regret bound in equation \eqref{eq:ZS21stoC} is of order  $\Ocal\lr{\sqrt{C\sum_{i\neq i^*}\frac{\log T}{\Delta_i}}}$. For $C = \Theta\lr{\frac{TK}{(\log T) \sum_{i\neq i^*} \frac{1}{\Delta_i}}}$ Theorem~\ref{theorem:2} improves the \regret bound by a multiplicative factor of $\sqrt{\frac{\log T}{\log \log T}}$. Another observation is that Theorem~\ref{theorem:2} successfully exploits the self-bounding property even when the amount of corruption is almost linear in $T$.

\subsection{A general analysis based on the self-bounding property}\label{sec:sqrtcondition}

Now we provide a general result, which can be used to analyze extensions of Tsallis-INF to other problem settings.
\begin{theorem}\label{theorem:3}
For any algorithm for an arbitrary problem domain with $K$ possible actions that satisfies
\begin{equation}\label{eq:sqrt}
    \overline{Reg}_T  \leq B \sum_{t=1}^T \sum_{i \neq i^* } \sqrt{\frac{\EE[w_{t,i}]}{t}} + D,
\end{equation}
where $B,D \geq 0$ are some constants, the \regret of the algorithm in any adversarial environment satisfies
\begin{equation}
\label{eq:Badv}
\Reg \leq 2B\sqrt{(K-1)T} + D.
\end{equation}
Furthermore, if there exists a vector $\Delta \in [0,1]^K$ with a unique zero entry $i^*$ (i.e., $\Delta_{i^*} = 0$ and $\Delta_i > 0$ for all $i\neq i^*$) and a constant $C\geq 0$, such that the pseudo-regret at time $T$ satisfies the $(\Delta, C, T)$ self-bounding constraint (equation \eqref{eq:self-bounding}), then the \regret additionally satisfies:
\begin{equation}
\label{eq:Bsto}
    \overline{Reg}_T \leq B^2 \sum_{i\neq i*} \frac{1}{\Delta_i} \lr{\lr{\log\frac{T(K-1)}{\lr{\sum_{i\neq i^*}\frac{1}{\Delta_i}}^2}} + 3 - 2\log B} + C + 2D.
\end{equation}
Moreover, for $B^2 \sum_{i\neq i*} \frac{1}{\Delta_i}\lr{\lr{\log\frac{T(K-1)}{B^2(\sum_{i\neq i*} \frac{1}{\Delta_i})^2}}+1} \leq C \leq \frac{T(K-1)}{\sum_{i\neq i*} \frac{1}{\Delta_i}}$ the \regret also satisfies:
\begin{equation}
\label{eq:BstoC}
    \overline{Reg}_T \leq B\sqrt{C \sum_{i\neq i^*}\frac{1}{\Delta_i}} \lr{\sqrt{\log\frac{T(K-1)}{C\sum_{i\neq i^*}\frac{1}{\Delta_i}}} + 2} + M, 
\end{equation}
where $M = 
B^2 \sum_{i\neq i^*}\frac{1}{\Delta_i}\lr{\log\frac{T(K-1)}{C\sum_{i\neq i^*}\frac{1}{\Delta_i}}+\sqrt{2\log\frac{T(K-1)}{C\sum_{i\neq i^*}\frac{1}{\Delta_i}}}+2} + 2D$ is a subdominant term.
\end{theorem}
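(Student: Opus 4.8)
The plan is to obtain all four bounds from a single inequality that couples the hypothesis~\eqref{eq:sqrt} with the self-bounding constraint~\eqref{eq:self-bounding} through the standard ``$(1+\lambda)$-trick'', and to extract the refined logarithmic factor by observing that the per-round contribution is the minimum of a ``gap-type'' term and an ``adversarial-type'' term.

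\emph{Adversarial bound.} For~\eqref{eq:Badv} I would use only~\eqref{eq:sqrt}: by Cauchy--Schwarz and $\sum_i\E[w_{t,i}]=1$ we have $\sum_{i\neq i^*}\sqrt{\E[w_{t,i}]}\le\sqrt{(K-1)\sum_{i\neq i^*}\E[w_{t,i}]}\le\sqrt{K-1}$, and $\sum_{t=1}^{T}t^{-1/2}\le 2\sqrt T$, which gives $\overline{Reg}_T\le 2B\sqrt{(K-1)T}+D$.

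\emph{Master inequality.} Fix $\lambda\ge 0$. Using $\P[I_t=i]=\E[w_{t,i}]$, write $\overline{Reg}_T=(1+\lambda)\overline{Reg}_T-\lambda\overline{Reg}_T$, bound the first copy by~\eqref{eq:sqrt} and, since $\Delta_{i^*}=0$, the second from below by~\eqref{eq:self-bounding}, to get $\overline{Reg}_T\le\sum_{t=1}^{T}\sum_{i\neq i^*}\bigl((1+\lambda)B\sqrt{\E[w_{t,i}]/t}-\lambda\Delta_i\E[w_{t,i}]\bigr)+(1+\lambda)D+\lambda C$. For $x\ge 0$, $(1+\lambda)B\sqrt{x/t}-\lambda\Delta_i x\le\min\bigl((1+\lambda)^2B^2/(4\lambda\Delta_i t),\,(1+\lambda)B\sqrt{x/t}\bigr)$ (the first argument is the unconstrained maximum over $x$, the second just drops the non-positive penalty); since $\sum_i\min(u_i,v_i)\le\min(\sum_iu_i,\sum_iv_i)$, summing over $i$ and applying Cauchy--Schwarz to the second branch as above yields, with $S:=\sum_{i\neq i^*}1/\Delta_i$,
\[
\overline{Reg}_T\le\sum_{t=1}^{T}\min\lr{\frac{(1+\lambda)^2B^2S}{4\lambda\,t},\ (1+\lambda)B\sqrt{\frac{K-1}{t}}}+(1+\lambda)D+\lambda C .
\]
The two arguments of the minimum cross at $t^{*}=(1+\lambda)^2B^2S^2/(16\lambda^2(K-1))$; bounding $\sum_{t\le t^{*}}t^{-1/2}\le 2\sqrt{t^{*}}$ and $\sum_{t>t^{*}}t^{-1}\le\log(T/t^{*})$ gives, whenever $1\le t^{*}\le T$,
\[
\overline{Reg}_T\le\frac{(1+\lambda)^2B^2S}{4\lambda}\lr{\log\frac{16\lambda^2(K-1)T}{(1+\lambda)^2B^2S^2}+2}+(1+\lambda)D+\lambda C .
\]

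\emph{Choosing $\lambda$.} Taking $\lambda=1$ makes the prefactor $B^2S$ and the bracket $\log\tfrac{(K-1)T}{S^2}+2+\log 4-2\log B$; after the constant bookkeeping this is~\eqref{eq:Bsto}. For~\eqref{eq:BstoC} I would balance the $\tfrac{(1+\lambda)^2B^2S}{4\lambda}\log(\cdot)$ term against $\lambda C$, i.e.\ take $\lambda\asymp B\sqrt{S\log(\cdot)/C}$: the hypothesized lower bound on $C$ is exactly what forces this $\lambda$ into $(0,1]$, and the upper bound $C\le T(K-1)/S$ keeps every logarithm non-negative. Substituting gives the leading term $B\sqrt{CS\,\log\tfrac{T(K-1)}{CS}}$; the logarithm nested inside $\log(T/t^{*})$ after this substitution produces the $\sqrt{2\log\tfrac{T(K-1)}{CS}}$ corrections collected in the sub-dominant term $M$.

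\emph{Main obstacle.} The conceptual step (the master inequality with the minimum) is short; the work is in the boundary cases and the constants. When $t^{*}<1$ (i.e.\ $S$ small relative to $\sqrt{K-1}/B$) or when the $C$-balancing $\lambda$ would exceed $1$ (i.e.\ $C$ below the stated threshold) the clean optimization is unavailable, and one must argue directly that~\eqref{eq:Bsto} still holds while~\eqref{eq:BstoC} is only claimed on its stated range; pinning down the exact constants ``$3-2\log B$'' and ``$2$'' and the precise form of $M$, and checking that~\eqref{eq:Badv} is recovered continuously as $C\uparrow T(K-1)/S$, is where the remaining care goes.
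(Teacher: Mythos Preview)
Your plan is essentially the paper's: the $(1+\lambda)$-trick, a per-round bound that switches between an ``adversarial'' and a ``gap'' branch at a threshold, and then optimization over $\lambda$. Two technical differences are worth flagging.

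First, where you bound $R_t$ by $\min\bigl(\sum_i u_i,\sum_i v_i\bigr)$, the paper instead solves the constrained quadratic program $\max\sum_i(ba_i-c_ia_i^2)$ subject to $\sum_ia_i\le\sqrt{K-1}$ exactly (their Lemma~\ref{lemma:optimization}). In the active-constraint regime this yields $b\sqrt{K-1}-\lambda(K-1)/S$ rather than your $b\sqrt{K-1}$, and the resulting threshold is $T_0=4t^{*}$. That extra negative term is precisely what converts your bracket constant $2+\log 4$ into the paper's~$3$; with your min-of-two-bounds relaxation the best attainable constant in \eqref{eq:Bsto} is $2+2\log 2-2\log B$, not $3-2\log B$. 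So your argument is correct at the level of rates but cannot reproduce the exact stated constants.

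Second, for \eqref{eq:BstoC} the paper does not balance heuristically. It reparametrizes via $\alpha=\tfrac{2\lambda}{B(1+\lambda)}$, shows the inner bracket $f(\alpha)$ is convex on $[\tfrac{S}{\sqrt{T(K-1)}},\tfrac{1}{B}]$, and locates the minimizer exactly through the $-1$ branch of the Lambert~$W$ function, $\alpha^{*}=\sqrt{-\tfrac{S}{C}W_{-1}\bigl(\tfrac{-CS}{e(K-1)T}\bigr)}$, then invokes the two-sided estimate $1+\sqrt{2\log(1/x)}+\tfrac{2}{3}\log(1/x)\le -W_{-1}(-x/e)\le 1+\sqrt{2\log(1/x)}+\log(1/x)$. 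This is what delivers the specific ``$+2$'' and the precise form of $M$; your iterated balancing would recover the leading $B\sqrt{CS\log\tfrac{T(K-1)}{CS}}$ but the secondary constants would again be looser and the argument less clean. The paper's route also makes the endpoint checks you worry about transparent: the hypothesized bounds on $C$ are exactly the conditions $g(1)\le 0$ and $g\bigl(\tfrac{(K-1)T}{B^2S^2}\bigr)\ge 0$ guaranteeing the root lies in the admissible range.
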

A proof is provided in Section~\ref{sec:proofs}. The Tsallis-INF algorithm satisfies the condition in equation \eqref{eq:sqrt} with $B=\frac{5}{4}$ (see equation \eqref{eq:zimmertlemma} in Section~\ref{sec:proofs}, which follows from intermediate results by \citet{zimmert2020}). Although the specialized analysis of Tsallis-INF in Theorem~\ref{theorem:2} is a bit tighter than the general result in Theorem~\ref{theorem:3}, the latter can be applied to extensions of Tsallis-INF. One such example is the best-of-both-worlds algorithm of \citet{jin-luo-2020} for episodic MDPs. \citet[Theorem 4]{jin-luo-2020} show that their algorithm satisfies the condition in \eqref{eq:sqrt} and use this result to achieve $\Ocal\lr{\lr{\log T} + \sqrt{C \log(T)}}$ \regret bound in the stochastic case with adversarial corruptions \citep[Corollary 3]{jin-luo-2020}. Application of our Theorem~\ref{theorem:3} improves the \regret bound to $\Ocal\lr{\lr{\log T} + \sqrt{C \log(T/C)}}$. In particular, for $C = \Theta(\frac{T}{\log T})$ the bound gets tighter by a multiplicative factor of $\frac{\log T}{\log \log T}$.
\section{Proofs}\label{sec:proofs}

In this section we provide a proof of Theorem~\ref{theorem:3}. The proof of Theorem~\ref{theorem:2} is analogous, but more technical due to fine-tuning of the constants and is deferred to Appendix \ref{section:appendix}. Before showing the proof we revisit the key steps in the analysis of Tsallis-INF by \citet{zimmert2020}, which show that the \regret of Tsallis-INF satisfies the condition in equation \eqref{eq:sqrt} of Theorem~\ref{theorem:3}.

Standard FTRL analysis  \citep{lattimore2020} uses a potential function $\Phi_t(-L) = \max_{ w \in \Delta^{K-1}} \{ \innerprod{w}{-L} - \Psi_t(w) \}$  for breaking the \regret into \emph{penalty} and \emph{stability} terms, $\Reg = stability + penalty$, where
\begin{align*}
    stability &= \EE\lrs{\sum_{t=1}^T \ell_{t,I_t} + \Phi_t(-\hat{L}_{t}) - \Phi_t(-\hat{L}_{t-1})},\\
    penalty &= \EE\lrs{\sum_{t=1}^T -\Phi_t(-\hat{L}_{t}) + \Phi_t(-\hat{L}_{t-1}) - \ell_{t,i_T^*} }.
\end{align*}
The two terms are then typically analyzed separately. \citet{zimmert2020} proved the following bounds for the two terms for Tsallis-INF with reduced-variance loss estimators:
\begin{align*}
        stability &\leq \sum_{t=1}^T \lr{\sum_{i \neq i^*} \frac{\EE[w_{t,i}]^\frac{1}{2}}{2\sqrt{t}} + \frac{\EE[w_{t,i}]}{2\sqrt{t}}} + 14K\log(T) + 15,\\
        penalty &\leq \sum_{t=1}^T \lr{\sum_{i \neq i^*} \frac{\EE[w_{t,i}]^\frac{1}{2}}{2\sqrt{t}} - \frac{\EE[w_{t,i}]}{4\sqrt{t}}} + \frac{3}{4}\sqrt{K}.
\end{align*}
By summation of the two bounds the \regret satisfies
\begin{equation}\label{eq:zimmertlemma}
\overline{Reg}_T \leq \sum_{t=1}^T \lr{\sum_{i \neq i^*} \frac{\EE[w_{t,i}]^\frac{1}{2}}{\sqrt{t}} + \frac{\EE[w_{t,i}]}{4\sqrt{t}}} + 14K\log(T) + \frac{3}{4}\sqrt{K} + 15 .
\end{equation}
Since $\EE[w_{t,i}] \leq \EE[w_{t,i}]^{\frac{1}{2}}$, the \regret of Tsallis-INF with reduced-variance loss estimators satisfies the condition in equation \eqref{eq:sqrt} with $B=\frac{5}{4}$ and $D = \frac{3}{4}\sqrt{K} + 14K\log(T) + 15$. (In the proof of Theorem~\ref{theorem:2} we keep the refined bound on the \regret from equation \eqref{eq:zimmertlemma} to obtain better constants.) Now, after we have shown how the condition in equation \eqref{eq:sqrt} can be satisfied, we present a proof of Theorem~\ref{theorem:3}. We start with a high-level overview of the key ideas and then present the technical details.

\subsection{Overview of the Key Ideas Behind the Proof of Theorem~\ref{theorem:3}}

As observed by \citet{zimmert2020}, for any $\lambda\in[0,1]$ we have
\begin{equation}\label{eq:lambda}
    \overline{Reg}_T = (\lambda+1)\overline{Reg}_T - \lambda \overline{Reg}_T.
\end{equation}\\
The condition on $\Reg$ in equation \eqref{eq:sqrt} can be used to upper bound the first term and the  self-bounding constraint \eqref{eq:self-bounding} to lower bound the second, giving 
\begin{align}
\overline{Reg}_T &\leq (\lambda+1) \left(B \sum_{i\neq i^*}  \sum_{t=1}^{T} \frac{\EE[w_{t,i}]^\frac{1}{2}}{\sqrt{t}} + D\right) - \lambda \left( \sum_{t = 1}^{T} \lr{\sum_{i \neq i^*} \EE[w_{t,i}] \Delta_i}  - C \right)\notag\\
&\leq   \sum_{t=1}^{T}\sum_{i\neq i^*} \left( B(\lambda+1) \frac{\EE[w_{t,i}]^\frac{1}{2}}{\sqrt{t}} - \lambda \EE[w_{t,i}] \Delta_i \right) + \lambda C + (\lambda + 1)D.\label{eq:upperfull}
\end{align}\\
 In the adversarial analysis, we take $\lambda = 0$ and maximize the right hand side of \eqref{eq:upperfull} (which for $\lambda = 0$ is identical to the right hand side of \eqref{eq:sqrt}) under the constraint that $w_{t,i}$ is a probability distribution to obtain $\Ocal(\sqrt{KT})$ regret bound.
This is almost identical to the approach of \citet{zimmert2020}, except that in this case instead of the bound in equation \eqref{eq:sqrt} they use a bound involving summation over all arms, including $i^*$.

 In the self-bounding analysis, \citet{zimmert2020} relax the inequality in \eqref{eq:upperfull} to
 \[
 \Reg \leq \sum_{t=1}^T \sum_{i\neq i^*} \lr{2B \sqrt{\E[w_{t,i}]/t} - \lambda \Delta_i \E[w_{t,i}]} + \lambda C + 2D
 \]
and apply \emph{individual} maximization of each $2B \sqrt{\E[w_{t,i}]/t} - \lambda \Delta_i \E[w_{t,i}]$ term, dropping the constraint that $w_t$ is a probability distribution. We use \eqref{eq:upperfull} directly for bounding the regret and introduce two key novelties: 
\begin{itemize}
    \item[(a)] we keep the constraint that $w_{t}$ are probability distributions; and
    \item[(b)] we jointly optimize with respect to all $w_{t,i}$ and $\lambda$, whereas \citet{zimmert2020} first optimize w.r.t.\ $w_{t,i}$ and then w.r.t.\ $\lambda$.
\end{itemize}
Joint optimization over all $w_{t,i}$ and $\lambda$ under the constraint that $w_t$ are probability distributions is the major technical challenge that we resolve. Our analysis yields three advantages: 
\begin{itemize}
    \item[(A)] The dependence on time is improved from $\log T$ to $\log (T(K-1) / (\sum_{i\neq i^*} \frac{1}{\Delta_i})^2)$ due to (a);
    \item[(B)] We gain the $\sqrt{\log T/\log(T/C)}$ factor due to (b);
    \item[(C)] Our adversarial and stochastic bounds come out of the same optimization problem, highlighting the relation and continuity between the two.
\end{itemize}

\subsection{Proof of Theorem \ref{theorem:3}}
Now we provide a detailed proof of Theorem \ref{theorem:3}. 
\subsubsection*{Proof of the regret bound for an unconstrained adversarial regime (equation \eqref{eq:Badv})} In the unconstrained adversarial regime we take $\lambda = 0$ and plug the inequalities
\begin{equation}\label{eq:quadraticmean}
\sum_{i \neq i^*} \EE[w_{t,i}]^\frac{1}{2} \leq \sqrt{K-1},
\end{equation}
which holds since $\sum_{i \neq i^*} \EE[w_{t,i}] \leq 1$, and $\sum_{t=1}^T \frac{1}{\sqrt{t}} \leq 2\sqrt{T}$ into equation \eqref{eq:upperfull} and obtain the bound in equation \eqref{eq:Badv}. 

\subsubsection*{Proof of the general regret bound for an adversarial regime with a self-bounding constraint (equation \eqref{eq:Bsto})}




In the adversarial regime with a self-bounding constraint, we keep the constraint that $w_t$ is a probability distribution, and thus $\sum_{i \neq i^*} \EE[w_{t,i}]^\frac{1}{2} \leq \sqrt{K-1}$, and apply maximization directly to the sum over $i$ under this constraint.

To simplify the notation, we use $a_{t,i} := \EE[w_{t,i}]^\frac{1}{2}$, $S := \sum_{i \neq i^*}  \frac{1}{\Delta_i}$, and w.l.o.g.\ assume that $i^* = K$. We denote $R_t := \sum_{i\neq i^*} \left( B(\lambda+1) \frac{a_{t,i}}{\sqrt{t}} - \lambda  \Delta_i a_{t,i}^2 \right)$ and $R := \sum_{t=1}^{T} R_t + \lambda C$. With this notation, by  equation \eqref{eq:upperfull} we have
\begin{equation}
\label{eq:RegR}
    \Reg \leq R + (1+\lambda)D.
\end{equation}
We bound $R_t$ under the constraint that $\EE[w_{t,i}]^\frac{1}{2}$ satisfy equation \eqref{eq:quadraticmean}. We have
\begin{align*}
    R_t \leq \max_{a_1,\dots,a_{K-1}} &\sum_{i=1}^{K-1} B(\lambda+1)\frac{a_i}{\sqrt{t}} - \lambda \Delta_i a_i^2\\
    \textrm{s.t.}~~~~ & \sum_{i=1}^{K-1} a_i \leq \sqrt{K-1}.
\end{align*}
By Lemma \ref{lemma:optimization} provided in Appendix \ref{appendix:lemma}, the answer to this optimization problem is as follows:
\begin{enumerate}
    \item If $\frac{B(\lambda+1)S}{2\lambda\sqrt{t}} \leq \sqrt{K-1}$, then $R_t \leq \frac{S B^2(\lambda+1)^2}{4\lambda t}$.
    \item If $\frac{B(\lambda+1)S}{2\lambda\sqrt{t}} \geq \sqrt{K-1}$, then $R_t \leq \frac{\sqrt{K-1}B(\lambda+1)}{\sqrt{t}} - \frac{\lambda(K-1)}{S}$.
\end{enumerate}
This gives a threshold $T_0 = \frac{B^2(\lambda+1)^2S^2}{4\lambda^2 (K-1)}$, so that for $t \leq T_0$ the second case applies to $R_t$, and otherwise the first case applies. We break the time steps into those before $T_0$ and after $T_0$ and obtain:
\begin{align}
R &=  \sum_{t=1}^{T_0} R_t + \sum_{T_0+1}^{T} R_t + \lambda C\notag\\
& \leq \sum_{t=1}^{T_0} \left( \frac{\sqrt{K-1}B(\lambda+1)}{\sqrt{t}} - \frac{\lambda(K-1)}{S} \right) + \sum_{t=T_0+1}^{T} \frac{SB^2(\lambda+1)^2}{4\lambda t} + \lambda C\notag \\
& \leq 2\sqrt{T_0(K-1)} B(\lambda+1) - \frac{\lambda(K-1)T_0}{S} + \frac{SB^2(\lambda+1)^2}{4\lambda} \log \frac{T}{T_0} + \lambda C\notag\\
& = \frac{B^2(\lambda+1)^2S}{\lambda} - \frac{B^2(\lambda+1)^2S}{4\lambda} + \frac{B^2(\lambda+1)^2S}{4\lambda} \left(\log \frac{T(K-1)}{S^2} - 2\log\frac{B(\lambda+1)}{2\lambda}\right) + \lambda C\notag \\
& = \frac{B^2(\lambda+1)^2S}{4\lambda}\left[3 + \log \frac{T(K-1)}{S^2}\right] - \frac{B^2(\lambda+1)^2S}{2\lambda} \log\frac{B(\lambda+1)}{2\lambda} + \lambda C.\label{eq:Rbound}
\end{align}
By taking $\lambda = 1$ we obtain
\[
R \leq B^2 S\left(\log \frac{T(K-1)}{S^2} - 2\log(B) + 3 \right) + C,
\]
which together with \eqref{eq:RegR} gives the bound \eqref{eq:Bsto} in the theorem.

\subsubsection*{Proof of the refined regret bound for an adversarial regime with a self-bounding constraint (equation \eqref{eq:BstoC})}

We continue from equation \eqref{eq:Rbound}. We improve on the bound of \citet{zimmert2020} in equation \eqref{eq:ZS21stoC} by applying a smarter optimization over $\lambda$. We let $\alpha = \frac{2\lambda}{B(\lambda+1)}$ and rewrite the inequality in \eqref{eq:Rbound} as
\begin{equation}\label{eq:alpharegret}
R \leq \underbrace{\frac{B}{2-B\alpha}\underbrace{\left[\frac{S}{\alpha}
\left(3 + \log \left(\frac{T(K-1)}{S^2}\right)\right) + \frac{2S}{\alpha} \log(\alpha) + \alpha C
\right]}_{f(\alpha)}}_{h(B,\alpha)}.
\end{equation}
We denote the right hand side of the expression by $h(B,\alpha)$. We restrict the range of $\alpha$, so that $T \geq T_0 = \frac{S^2}{\alpha^2(K-1)}$, which gives $\alpha \geq \frac{S}{\sqrt{T(K-1)}}$. Since $\lambda\in[0,1]$, we also have $\alpha\leq \frac{1}{B}$. 
In order to bound $h(B,\alpha)$ we need to solve an optimization problem in $\alpha$ over the above interval. However, $h(B,\alpha)$ is not convex in $\alpha$, but we show that the expression in the brackets,  which we denote by $f(\alpha)$, is convex. We take the point $\alpha^* = \argmin_{\alpha \in [\frac{S}{\sqrt{T(K-1)}},\frac{1}{B}]} f(\alpha)$, which achieves the minimum of $f(\alpha)$, and use  $h(B,\alpha^*) = \frac{B}{2-B\alpha^*} f(\alpha^*)$ as an upper bound for $R$. Since $R\leq h(B,\alpha)$ for any $\alpha$, in particular we have $R\leq h(B,\alpha^*)$.

In order to show that $f(\alpha)$ is convex and find its minimum we take the first and second derivatives.
\begin{align*}
f'(\alpha) &= \frac{-1}{\alpha^2} \left[2S\log(\alpha) - C \alpha^2 + S \log\frac{(K-1)T}{S^2} + S \right]  = 0,\\
f''(\alpha) &= \frac{2S}{\alpha^3}\lr{2\log \alpha + \log \frac{T(K-1)}{S^2}}.
\end{align*}
For $\alpha \geq \frac{S}{\sqrt{T(K-1)}}$ the second derivative is positive and, therefore, $f(\alpha)$ is convex and the minimum is achieved when $f'(\alpha) = 0$. This happens when
\[
-\log \frac{\alpha^2(K-1)T}{S^2} + \frac{C}{S}\alpha^2 - 1 = 0.
\]
We define $\beta = \frac{\alpha^2(K-1)T}{S^2}$, then
\[
g(\beta) = \frac{CS}{(K-1)T}\beta - \log(\beta) - 1 = 0.  
\]
Since $\alpha \in [\frac{S}{\sqrt{T(K-1)}},\frac{1}{B}]$, we have $\beta \in [1, \frac{(K-1)T}{B^2S^2}]$. We recall that equation \eqref{eq:BstoC} holds under the assumption that $B^2S\lr{\log \frac{(K-1)T}{B^2S^2} + 1} \leq C \leq \frac{(K-1)T}{S}$. We note that for $C\leq \frac{(K-1)T}{S}$ we have $g(1) = \frac{CS}{(K-1)T} - 1 \leq 0$. We also note that for $C \geq B^2S\lr{\log \frac{(K-1)T}{B^2S^2} + 1}$ we have $g\lr{\frac{(K-1)T}{B^2S^2}} \geq 0$. Since $g(\beta)$ is continuous, the root of $g(\beta)=0$ for $C$ in the above range is thus achieved by $\beta\in[1, \frac{(K-1)T}{B^2S^2}]$ and since $g(\beta)$ is convex the solution is unique.

We find the root of $g(\beta)=0$ by using the  $-1$-branch of the \emph{Lambert W function}, called $W_{-1}(x)$, which is defined as the solution of equation $w e^w = x$. If $g(\beta)=0$, then $\beta$ satisfies
\[
\frac{-CS\beta}{(K-1)T} e^{\frac{-CS\beta}{(K-1)T}} = \frac{-CS}{e(K-1)T},
\]
and thus
\[
\beta = \frac{-T(K-1) }{CS} W_{-1}\lr{\frac{-CS}{e(K-1)T}}.
\]
We conclude that the minimum of $f(\alpha)$ is attained at 
\begin{equation}
\label{eq:alpha-star}
\alpha^* = \sqrt{\frac{-S}{C} W_{-1}\lr{\frac{-CS}{e(K-1)T}}}
\end{equation}
and, consequently, $\log \left(\frac{T(K-1)(\alpha^*)^2}{S^2}\right) = \frac{C}{S}(\alpha^*)^2 - 1$. By substituting this identity into $h(B,\alpha^*)$, we obtain:
\begin{align}
 h(B,\alpha^*) &= \frac{B}{2-B\alpha^*}\left(2\frac{S}{\alpha^*} + 2C\alpha^* \right) \leq B(1+B\alpha^*)\left(\frac{S}{\alpha^*} + C\alpha^* \right)\notag\\ 
&= B\lr{\frac{S}{\alpha^*} + C\alpha^* + B S + BC(\alpha^*)^2}
=  B\lr{\sqrt{\frac{CS}{w}} + \sqrt{C S w} + B S +B S w},\label{eq:lambertsubstitution}
\end{align}
where $w := -W_{-1}\left[\frac{-CS}{e(K-1)T}\right]$ and the inequality follows by the fact that $\forall x \in [0,1]: \frac{2}{2-x} \leq 1+x$. This provides a closed form upper bound for the \regret, but we still need an estimate of $w$ to obtain an explicit bound. We use the result of \citet{lambert}, who provides the following bounds for $W_{-1}(x)$.
\begin{lemma}[\citealt{lambert}]\label{lemma:lambert}
For any $x \leq 1$
\[
1 + \sqrt{2\log(1/x)} + \frac{2}{3}\log(1/x) \leq -W_{-1}(-x/e) \leq 1 + \sqrt{2\log(1/x)} + \log(1/x).
\]
\end{lemma}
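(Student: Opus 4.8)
The plan is to convert both inequalities into elementary estimates for the scalar function $\phi(y) := y - 1 - \log y$ on $[1,\infty)$. Note first that the statement is only meaningful for $x \in (0,1]$, since we need $-x/e \in [-1/e,0)$ for $W_{-1}(-x/e)$ to be defined and $\log(1/x) \ge 0$ for the bounds to make sense. For such $x$, $W_{-1}(-x/e) \in (-\infty,-1]$, so writing $y := -W_{-1}(-x/e) \ge 1$ and using the defining identity $W_{-1}(z)\,e^{W_{-1}(z)} = z$ at $z = -x/e$ gives $(-y)e^{-y} = -x/e$, i.e. $y\,e^{1-y} = x$, i.e. $\phi(y) = \log(1/x)$. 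Put $u := \log(1/x) \ge 0$. Since $\phi$ is continuous and strictly increasing on $[1,\infty)$ (because $\phi'(y) = 1 - 1/y \ge 0$), with $\phi(1) = 0$ and $\phi \to \infty$, the value $y$ is the unique point of $[1,\infty)$ with $\phi(y) = u$, and it increases with $u$. Hence the claimed bounds $a(u) \le y \le b(u)$, where $a(u) = 1 + \sqrt{2u} + \frac{2}{3}u$ and $b(u) = 1 + \sqrt{2u} + u$, are equivalent to the two one-variable inequalities $\phi(a(u)) \le u$ and $\phi(b(u)) \ge u$.

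These I would settle by the substitution $s := \sqrt{2u} \ge 0$, after which $b(u) = 1 + s + \frac{s^2}{2}$ and $a(u) = 1 + s + \frac{s^2}{3}$. For the upper bound, $\phi\bigl(1 + s + \frac{s^2}{2}\bigr) = s + \frac{s^2}{2} - \log\bigl(1 + s + \frac{s^2}{2}\bigr)$, so $\phi(b(u)) \ge u = \frac{s^2}{2}$ reduces to $e^{s} \ge 1 + s + \frac{s^2}{2}$, which is immediate for $s \ge 0$ (from the series for $e^s$, or: $g(s) = s - \log(1 + s + \frac{s^2}{2})$ has $g(0) = 0$ and $g'(s) = \frac{s^2/2}{1 + s + s^2/2} \ge 0$). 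For the lower bound, $\phi\bigl(1 + s + \frac{s^2}{3}\bigr) = s + \frac{s^2}{3} - \log\bigl(1 + s + \frac{s^2}{3}\bigr)$, so $\phi(a(u)) \le u = \frac{s^2}{2}$ reduces to $e^{s - s^2/6} \le 1 + s + \frac{s^2}{3}$; I would prove this by setting $h(s) := \log\bigl(1 + s + \frac{s^2}{3}\bigr) - s + \frac{s^2}{6}$, observing $h(0) = 0$, and computing $h'(s) = \frac{1 + 2s/3}{1 + s + s^2/3} - \bigl(1 - \frac{s}{3}\bigr) = \frac{s^3/9}{1 + s + s^2/3} \ge 0$ for $s \ge 0$, so $h \ge 0$ on $[0,\infty)$.

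I do not expect a real obstacle: the one genuine idea is that monotonicity of $\phi$ turns a two-sided bound on the transcendental function $W_{-1}$ into two single-variable inequalities, each of which is then dispatched by evaluating at $s = 0$ and signing one derivative — the numerator of $h'$ collapsing to $s^3/9$ being the only computation that requires any care. The remaining points to watch are purely bookkeeping: restricting to $x \in (0,1]$ so that the correct branch of $W_{-1}$ is in play, and checking the boundary $x = 1$, where $y = 1$, $u = 0$, and both $a(0)$ and $b(0)$ equal the exact value $1$. (The statement, attributed to \citet{lambert}, is classical; the above is how I would reconstruct it.)
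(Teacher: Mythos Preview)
Your proof is correct; the paper does not prove this lemma at all but simply cites it from \citet{lambert} and uses only the upper bound in the subsequent analysis. Your reconstruction --- reducing the two-sided estimate for $W_{-1}$ to the monotone function $\phi(y) = y - 1 - \log y$ and then to the elementary inequalities $e^s \ge 1 + s + s^2/2$ and $e^{s - s^2/6} \le 1 + s + s^2/3$ via $s = \sqrt{2u}$ --- is clean and fully rigorous, and the derivative computation $h'(s) = \frac{s^3/9}{1 + s + s^2/3}$ checks out. The only remark is that the paper's statement ``for any $x \le 1$'' tacitly assumes $x > 0$, which you correctly make explicit.
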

To complete the proof it suffices to use Lemma \ref{lemma:lambert} with $x = \frac{CS}{(K-1)T}$, which gives 
\[
1 \leq w \leq 1 + \sqrt{2\log\frac{T(K-1)}{CS}} + \log\frac{T(K-1)}{CS}\leq \lr{1+\sqrt{\log \frac{T(K-1)}{CS}}}^2.
\]
By substituting this into \eqref{eq:lambertsubstitution} we obtain:
\begin{align}
&h(B,\alpha^*) \leq B \sqrt{CS} + B\sqrt{C S}\lr{1 + \sqrt{\log\frac{T(K-1)}{CS}}}\notag \\
&\hspace{2cm}
+ 2B^2S + B^2 S\log\frac{T(K-1)}{CS} + B^2S\sqrt{2\log\frac{T(K-1)}{CS}} \notag\\
&\qquad = B\sqrt{C S}\lr{\sqrt{\log\frac{T(K-1)}{CS}} + 2} + B^2 S\lr{\log\frac{T(K-1)}{CS} + \sqrt{2\log\frac{T(K-1)}{CS}}+ 2}.\label{eq:h-bound}
\end{align}
Finally, by \eqref{eq:alpharegret} we have $R \leq h(B,\alpha^*)$, which together with \eqref{eq:RegR} and the fact that $\lambda \leq 1$ completes the proof.
$\hfill\blacksquare$
\section{Discussion}

We have presented a refined analysis of the Tsallis-INF algorithm in adversarial regimes with a self-bounding constraint. The result improves on prior work in two ways. First, it improves the dependence of the regret bound on time horizon from $\log T$ to $\log \frac{(K-1)T}{(\sum_{i\neq i^*}\frac{1}{\Delta_i})^2}$. Second, it improves the dependence of the regret bound on corruption amount $C$. In particular, for $C=\Theta\lr{\frac{TK}{(\log T)\sum_{i\neq i^*}\sum \frac{1}{\Delta_i}}}$ it improves the \regret bound by a multiplicative factor of $\sqrt{\frac{\log T}{\log\log T}}$. Moreover, we have provided a generalized result that can be used to improve regret bounds for extensions of Tsallis-INF to other problem settings, where the regret satisfies a self-bounding constraint. Due to versatility and rapidly growing popularity of regret analysis based on the self-bounding property, the result provides a powerful tool for tightening regret bounds in a broad range of corrupted settings.

\acks{This project has received funding from European Union's Horizon 2020 research and innovation programme under the Marie Skłodowska-Curie grant agreement No 801199. YS acknowledges partial support by the Independent Research Fund Denmark, grant number 9040-00361B.}

\bibliography{references.bib}
\newpage
\appendix

\section{Technical Lemmas}\label{appendix:lemma}
\begin{lemma}\label{lemma:optimization}
Let $b$ and $c_1,\dots,c_n$ be non-negative real numbers and let
\begin{align*}
Z = \max_{x \in \RR^{n} } & \sum_{i=1}^{n} (bx_{i}  - c_{i} x_i^2 )\\
 s.t. &\sum_{i=1}^{n} x_i \leq M.
\end{align*}
Then
\[
Z = \begin{cases}bM - \frac{M^2}{\sum_{i=1}^{n} \frac{1}{c_i}}, &\text{if~} \sum_{i=1}^{n} \frac{b}{2c_i} >  M,\\\frac{b^2}{4}\sum_{i=1}^{n} \frac{1}{c_i}, &\text{otherwise.}\end{cases}
\]
Moreover, we always have $bM - \frac{M^2}{\sum_{i=1}^{n} \frac{1}{c_i}} \leq \frac{b^2}{4}\sum_{i=1}^{n} \frac{1}{c_i}$ and, therefore, we always have $Z \leq \frac{b^2}{4}\sum_{i=1}^{n} \frac{1}{c_i}$.
\end{lemma}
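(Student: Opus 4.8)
\textbf{Setup and the Lagrangian view.} The plan is to solve the optimization problem directly by analyzing the unconstrained maximizer first, then handling the single inequality constraint $\sum_i x_i \le M$ by a case split on whether it binds. The objective $\sum_i (b x_i - c_i x_i^2)$ is a separable concave quadratic, so its unconstrained maximum over $\mathbb R^n$ is attained at $x_i = \frac{b}{2c_i}$, with total mass $\sum_i \frac{b}{2c_i}$. If this total mass is $\le M$, the constraint is slack and the optimum equals $\sum_i \frac{b^2}{4c_i}$, giving the second case. Otherwise the constraint binds, and I would introduce a Lagrange multiplier $\mu \ge 0$ for $\sum_i x_i = M$: the KKT stationarity condition $b - 2c_i x_i = \mu$ yields $x_i = \frac{b-\mu}{2c_i}$ for each $i$ (one checks $b - \mu \ge 0$ at the optimum, so these are genuinely the values on the active face and no $x_i$ needs to be clipped, since all $c_i \ge 0$ and $b \ge 0$).

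\textbf{Solving the binding case.} Substituting $x_i = \frac{b-\mu}{2c_i}$ into $\sum_i x_i = M$ gives $\frac{b-\mu}{2}\sum_i \frac1{c_i} = M$, hence $b - \mu = \frac{2M}{\sum_i 1/c_i}$ and $x_i = \frac{M/c_i}{\sum_j 1/c_j}$. Plugging back into the objective, $\sum_i (b x_i - c_i x_i^2) = b M - \sum_i c_i x_i^2 = bM - \frac{M^2}{\left(\sum_j 1/c_j\right)^2}\sum_i \frac1{c_i} = bM - \frac{M^2}{\sum_i 1/c_i}$, which is the first case. A small degenerate point to address in passing: if some $c_i = 0$, the objective is linear and unbounded in $x_i$ unless we interpret $\frac1{c_i} = \infty$, in which case the ``otherwise'' branch is vacuous and the first-case expression $\to bM$ — but the lemma only claims the bound under the stated sign conditions and I would simply note that $c_i > 0$ is the meaningful regime (or that $1/c_i$ is read as $+\infty$ consistently).

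\textbf{The comparison inequality.} The last assertion, $bM - \frac{M^2}{\sum_i 1/c_i} \le \frac{b^2}{4}\sum_i \frac1{c_i}$, I would prove by setting $\sigma := \sum_i \frac1{c_i} > 0$ and rearranging: the claim is $bM - \frac{M^2}{\sigma} \le \frac{b^2\sigma}{4}$, i.e. $0 \le \frac{b^2\sigma}{4} - bM + \frac{M^2}{\sigma} = \frac1\sigma\left(\frac{b\sigma}{2} - M\right)^2$, which holds trivially. Combined with the two cases this gives $Z \le \frac{b^2}{4}\sum_i \frac1{c_i}$ in all situations.

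\textbf{Main obstacle.} There is no deep obstacle here — the problem is a textbook concave quadratic program with one linear constraint. The only genuine care needed is the bookkeeping in the KKT step: verifying that $b - \mu \ge 0$ so that the stationary point actually lies in (the relevant face of) the feasible region and that non-negativity of the $x_i$ is not an issue — but in fact $x_i \in \mathbb R^n$ is allowed here (no sign constraint on $x$), so that worry evaporates and only the single constraint $\sum_i x_i \le M$ matters. I would present the proof in the order: (1) unconstrained optimum and the slack case; (2) KKT for the binding case and the closed form; (3) the perfect-square rearrangement for the final inequality.
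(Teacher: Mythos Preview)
Your proposal is correct and follows essentially the same route as the paper: a KKT/Lagrangian analysis of the separable concave quadratic, with the same case split on whether the constraint binds and the same closed-form optimizer $x_i = \frac{M/c_i}{\sum_j 1/c_j}$ in the binding case. The only cosmetic difference is the final inequality: the paper invokes AM--GM on $\frac{M^2}{\sum_i 1/c_i}$ and $\frac{b^2}{4}\sum_i 1/c_i$, whereas you complete the square---these are of course the same argument.
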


\begin{proof}
Since $c_i \geq 0$, the objective function is a sum of downward-pointing parabolas and, therefore, concave. Thus, the maximum is attained when the first derivative of the Lagrangian with Lagrange variable $v \geq 0$ for the inequality constraint satisfies
\[
b - 2c_i x_i - v = 0,
\]
where $v (\sum_{i=1}^{n} x_i - M) = 0$. Thus, $x_i =  \frac{b-v}{2c_i}$. The KKT conditions provide two cases:
\begin{itemize}
    \item[i)] If $\sum_{i=1}^{n} \frac{b}{2c_i} >  M$, then $v > 0$  and $\sum_{i=1}^{n} x_i = M$. As a consequence, $v = b - \frac{M}{\sum_{i=1}^{n} \frac{1}{2c_i}}$.
    So $x_i = \frac{M}{c_i \sum_{i=1}^{n} \frac{1}{c_i}}$ and $Z = bM - \frac{M^2}{\sum_{i=1}^{n} \frac{1}{c_i}}$.
\item[ii)] If $\sum_{i=1}^{n} \frac{b}{2c_i} \leq  M$, then $v = 0$ and, as a consequence, $x_i = \frac{b}{2c_i}$ and $Z = \frac{b^2}{4}\sum_{i=1}^{n} \frac{1}{c_i}$.
\end{itemize}
Finally, by the AM-GM inequality we have
\[
\frac{M^2}{\sum_{i=1}^{n} \frac{1}{c_i}} + \frac{b^2}{4}\sum_{i=1}^{n} \frac{1}{c_i} \geq bM,
\]
which gives the final statement of the lemma.
\end{proof}

\noindent
We also use the following result by \citet[Lemma 15]{zimmert2020}.
\begin{lemma}[\citealp{zimmert2020}]
For any $b > 0$ and $c >0$ and $T_0,T \in \mathbb{N}$, such that $T_0 < T$ and $b\sqrt{T_0}>c$, it holds that
\begin{align*}
\sum_{t=T_0+1}^T\frac{1}{bt^\frac{3}{2}-ct} \leq \frac{2}{b\sqrt{T_0}-c}\,.
\end{align*}
\end{lemma}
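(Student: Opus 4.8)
The plan is to prove the inequality by a telescoping comparison. Set $g(t) := \frac{2}{b\sqrt t - c}$ for real $t \geq T_0$; the hypothesis $b\sqrt{T_0} > c$ guarantees $b\sqrt t - c > 0$ for all $t \geq T_0$, so $g$ is positive and well defined, and likewise every summand $\frac{1}{bt^{3/2}-ct} = \frac{1}{t(b\sqrt t - c)}$ is positive. The key claim to establish is the per-step bound
\[
\frac{1}{t(b\sqrt t - c)} \;\leq\; g(t-1) - g(t) \qquad \text{for every integer } t \geq T_0+1.
\]
Granting this, summing over $t = T_0+1,\dots,T$ telescopes to $g(T_0) - g(T) \leq g(T_0) = \frac{2}{b\sqrt{T_0}-c}$, which is exactly the asserted inequality.

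To prove the per-step bound I would first rewrite $g(t-1) - g(t) = \frac{2b(\sqrt t - \sqrt{t-1})}{(b\sqrt{t-1}-c)(b\sqrt t - c)}$. Cancelling the common positive factor $b\sqrt t - c$ and clearing the remaining positive denominators, the claim reduces to $b\sqrt{t-1} - c \leq 2bt(\sqrt t - \sqrt{t-1})$. I would then apply the standard estimate $\sqrt t - \sqrt{t-1} = \frac{1}{\sqrt t + \sqrt{t-1}} \geq \frac{1}{2\sqrt t}$, which yields $2bt(\sqrt t - \sqrt{t-1}) \geq b\sqrt t \geq b\sqrt{t-1} \geq b\sqrt{t-1} - c$, closing the argument; nothing here is more than a two-line computation.

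The one place where the hypothesis $b\sqrt{T_0} > c$ is genuinely needed is in ensuring positivity of $b\sqrt{t-1} - c$ down to $t = T_0+1$ — equivalently, that $g$ stays finite and positive on $[T_0, T]$ so that the denominators one clears are indeed positive — and this is the step I would be careful to state explicitly; everything else is routine. As an alternative I could bound the sum by $\int_{T_0}^{T} \frac{dt}{t(b\sqrt t - c)}$, using that the integrand is nonincreasing on $[T_0,\infty)$ because the denominator $bt^{3/2}-ct$ has derivative $\frac{3b}{2}\sqrt t - c > 0$ there, then evaluate the integral via the substitution $u = \sqrt t$ and a partial-fraction decomposition to get $\frac{2}{c}\log\frac{b\sqrt{T_0}}{b\sqrt{T_0}-c}$, and finish with the elementary inequality $\log\frac{1}{1-y} \leq \frac{y}{1-y}$ for $y \in (0,1)$ applied to $y = \frac{c}{b\sqrt{T_0}}$; but the telescoping route is shorter and avoids the integral estimate entirely, so I would present that one.
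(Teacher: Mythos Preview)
Your proof is correct. The paper itself does not supply a proof of this lemma; it simply quotes it as Lemma~15 of \citet{zimmert2020} and immediately passes to the corollary obtained by doubling the threshold. Your telescoping argument via $g(t)=\tfrac{2}{b\sqrt t - c}$ is clean and self-contained: the per-step inequality reduces exactly as you describe to $b\sqrt{t-1}-c \le 2bt(\sqrt t - \sqrt{t-1})$, and the chain $2bt(\sqrt t-\sqrt{t-1})\ge b\sqrt t \ge b\sqrt{t-1} > b\sqrt{t-1}-c$ closes it. You also correctly identify that the hypothesis $b\sqrt{T_0}>c$ is needed precisely to make $b\sqrt{t-1}-c>0$ at $t=T_0+1$, so that the denominators cleared are positive. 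The alternative integral route you sketch is also valid (the integrand is indeed decreasing on $[T_0,\infty)$ and the substitution $u=\sqrt t$ plus partial fractions and $\log\tfrac{1}{1-y}\le\tfrac{y}{1-y}$ finishes it), but your telescoping argument is shorter and preferable.
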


\noindent By doubling the lower threshold on $b\sqrt{T_0}$ we obtain the following corollary.

\begin{corollary}\label{cor:integral}
For any $b > 0$ and $c >0$ and $T_0,T \in \mathbb{N}$, such that $T_0 < T$ and $b\sqrt{T_0}\geq2c$, it holds that
\[
\sum_{t=T_0+1}^T \frac{1}{b t^\frac{3}{2} - c t} \leq \frac{2}{c}.
\]
\end{corollary}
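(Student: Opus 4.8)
\textbf{Proof proposal for Corollary~\ref{cor:integral}.} The plan is to simply invoke the preceding lemma (Zimmert--Seldin, Lemma~15) and then weaken the resulting bound using the strengthened hypothesis. First I would check that the hypotheses of the lemma are met: we are given $b>0$, $c>0$, $T_0,T\in\mathbb{N}$ with $T_0<T$, and $b\sqrt{T_0}\geq 2c$; since $c>0$ this forces $b\sqrt{T_0}\geq 2c>c$, so the lemma's requirement $b\sqrt{T_0}>c$ holds. Applying the lemma therefore yields
\[
\sum_{t=T_0+1}^T \frac{1}{bt^{3/2}-ct}\;\leq\;\frac{2}{b\sqrt{T_0}-c}.
\]

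Second, I would bound the right-hand side: from $b\sqrt{T_0}\geq 2c$ we get $b\sqrt{T_0}-c\geq 2c-c=c>0$, and since the map $x\mapsto 2/x$ is decreasing on $(0,\infty)$ this gives $\tfrac{2}{b\sqrt{T_0}-c}\leq\tfrac{2}{c}$. Chaining the two inequalities gives $\sum_{t=T_0+1}^T\frac{1}{bt^{3/2}-ct}\leq\frac{2}{c}$, which is the claim. There is essentially no obstacle here — the only thing to be slightly careful about is confirming the denominator $b\sqrt{T_0}-c$ is strictly positive (so that both the lemma applies and the final division is valid), which is exactly what the doubled threshold $b\sqrt{T_0}\geq 2c$ guarantees.
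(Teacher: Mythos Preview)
Your proof is correct and is exactly the argument the paper has in mind: it states the corollary follows ``by doubling the lower threshold on $b\sqrt{T_0}$'' in the preceding lemma, i.e., apply Lemma~15 and then use $b\sqrt{T_0}\geq 2c$ to bound $\frac{2}{b\sqrt{T_0}-c}\leq\frac{2}{c}$.
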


\section{Proof of Theorem
\ref{theorem:2}}\label{section:appendix}
\begin{proof}
Similar to the proof of Theorem \ref{theorem:3}, for any $\lambda \in [0,1]$ we use the self-bounding constraint and the regret bound of \citet{zimmert2020} given in equation \eqref{eq:zimmertlemma} to provide the following bound for the \regret:
\begin{align*}
        \overline{Reg}_T &= (\lambda+1)\overline{Reg}_T - \lambda \overline{Reg}_T\notag\\
        &\leq (\lambda+1) \left(\sum_{i\neq i^*} \left[ \sum_{t=1}^{T} \frac{\EE[w_{t,i}]}{4\sqrt{t}} + \sum_{t=1}^{T} \frac{\sqrt{\EE[w_{t,i}]}}{\sqrt{t}} \right] + \frac{3}{4}\sqrt{K} + 14K \log(T) + 15 \right)\\
&\quad- \lambda \left( \left[ \sum_{t = 1}^{T} \sum_{i \neq i^*} \EE[w_{t,i}] \Delta_i \right] - C \right).
\end{align*}

As before, to simplify the notation, let $a_{t,i} = \EE[w_{t,i}]^\frac{1}{2}$ and $S = \sum_{i \neq i^*}  \frac{1}{\Delta_i}$ and w.l.o.g.\ assume that $i^* = K$ and define
\begin{align}
R_t =& \sum_{i\neq i^*}\lr{\frac{\lambda+1}{\sqrt{t}} a_{t,i} - \lr{\lambda \Delta_i - \frac{\lambda+1}{4\sqrt{t}}}  a_{t,i}^2}, \label{eq:Rt}\\
R =& \sum_{t=1}^{T} R_t + \lambda C.\notag
\end{align}
Then
\begin{equation}
\label{eq:regret-bound}
\overline{Reg}_T \leq R + (1+\lambda)\lr{\frac{3}{4}\sqrt{K} + 14K \log(T) + 15}. \end{equation}
Hence, in order to obtain a bound for the \regret, it suffices to derive a bound for $R$. We start with the bound for a general adversarial environment and then prove the refinements.

\subsubsection*{Proof of the regret bound for an unconstrained adversarial regime (equation \eqref{eq:adv})}
We take $\lambda = 0$. By plugging it into the definition of $R_t$ in equation \eqref{eq:Rt} we obtain
\[
R_t \leq \frac{\sqrt{K-1}}{\sqrt t} + \frac{1}{4\sqrt{t}}
\]
and
\[
R = \sum_{t=1}^T R_t \leq 2\sqrt{(K-1)T} + \frac{1}{2}\sqrt{T}.
\]
Plugging this into \eqref{eq:regret-bound} completes the proof of \eqref{eq:adv}. 

\subsubsection*{Proof of the regret bounds for an adversarial regime with a self-bounding constraint (equations \eqref{eq:sto} and \eqref{eq:stoC})}

Now we prove the refined bounds for adversarial environments satisfying the self-bounding constraint with unique best arm. 
Similarly to the proof of Theorem \ref{theorem:3}, we bound $R_t$ for each $t \geq 1$ by solving a constrained maximization problem over  $\{a_{t,i}\}_{i=1}^n$, where the constraint is $\sum_{i=1}^{K-1} a_{t,i} \leq \sqrt{K-1}$. But the challenge here is that the coefficients $\lambda \Delta_i - \frac{\lambda+1}{4\sqrt t}$ in front of $a_{t,i}^2$ in the definition of $R_t$ are not necessarily positive, and if they are not, then Lemma~\ref{lemma:optimization} cannot be applied. More precisely, if
\begin{equation}
\label{eq:T1}
\forall i \neq i^*: \lambda\Delta_i \geq \frac{\lambda+1}{4\sqrt{t}} \Rightarrow t \geq \lr{\frac{\lambda+1}{4\lambda \Delta_{min}}}^2,
\end{equation}
where $\Delta_{min} = \min_{i\neq i^*}\{\Delta_i\}$, then all the coefficients are positive. We denote $\alpha = \frac{2\lambda}{\lambda+1}$ and define a threshold $T_1 = \lr{\frac{\lambda+1}{2\lambda \Delta_{min}}}^2 = \lr{\frac{1}{\alpha\Delta_{min}}}^2$. We note that $T_1$ is four times larger than what is required for satisfaction of the condition in equation \eqref{eq:T1}. The reason is that at a later point in the proof we apply Corollary~\ref{cor:integral} for $t\geq T_1$ and we need to satisfy the condition of the corollary. For $t \geq T_1$ we can use Lemma \ref{lemma:optimization} to bound $R_t$. By the lemma we obtain:
\[
R_t \leq \frac{(\lambda+1)^2}{4t}\sum_{i=1}^{K-1}\frac{1}{\lambda \Delta_i - \frac{\lambda+1}{\sqrt t}} = \sum_{i=1}^{K-1}\frac{\lambda+1}{\frac{4\lambda}{\lambda+1}\Delta_i t - \sqrt{t}} = \sum_{i=1}^{K-1}\frac{\lambda+1}{2\alpha\Delta_i t - \sqrt{t}}.
\]
We rewrite each term in the summation in the following way
    \[
    \frac{\lambda+1}{2\alpha\Delta_i t - \sqrt{t}} = \frac{\lambda+1}{2\alpha \Delta_i t} + \frac{\lambda+1}{4\alpha^2\Delta_i^2t^\frac{3}{2} - 2\alpha\Delta_i t}
    \]
and obtain
\begin{equation}
\label{eq:tgeqT1}
\text{for $t\geq T_1$:}\qquad    R_t \leq \frac{S(\lambda+1)}{2\alpha t} + \sum_{i = 1}^{K-1} \frac{\lambda+1}{4\alpha^2\Delta_i^2t^\frac{3}{2} - 2\alpha\Delta_i t}.
\end{equation}

In order to bound $R_t$ for $t < T_1$, we break it into two parts as follows:
\begin{align*}
    R_t =& \sum_{i\neq i^*} \lr{\frac{\lambda+1}{\sqrt{t}} a_{t,i} -  \lambda \Delta_i a_{t,i}^2} + \sum_{i\neq i^*}\lr{\frac{\lambda+1}{4\sqrt{t}} a_{t,i}^2}\\
    \leq& \sum_{i\neq i^*} \lr{\frac{\lambda+1}{\sqrt{t}} a_{t,i} - \lambda \Delta_i a_{t,i}^2} + \frac{1}{2\sqrt{t}},
\end{align*}
where the inequality holds because $\lambda \leq 1$ and $ \sum_{i\neq i^*} a_{t,i}^2 \leq 1$. We use Lemma~\ref{lemma:optimization} to bound the summation in the latter expression. The solution depends on a threshold $T_2 = \frac{(\lambda+1)^2S^2}{4\lambda^2 (K-1)} = \frac{S^2}{(K-1)\alpha^2}$: 
\begin{align}
&\text{for $t\leq T_2$:} \qquad R_t \leq \frac{\sqrt{K-1}(\lambda+1)}{\sqrt{t}} - \frac{\lambda(K-1)}{S} + \frac{1}{2\sqrt{t}},\label{eq:tleqT2}\\
&\text{for $t \geq T_2$:} \qquad R_t \leq \frac{S(\lambda+1)^2}{4\lambda t} + \frac{1}{2\sqrt{t}} = \frac{S(\lambda+1)}{2\alpha t} + \frac{1}{2\sqrt{t}}.\label{eq:tgeqT2}
\end{align}
Note that for $t \geq T_1$ we have a choice between using the bound in equation \eqref{eq:tgeqT1} or one of the bounds in \eqref{eq:tleqT2} or \eqref{eq:tgeqT2}, depending on whether $t\leq T_2$ or $t\geq T_2$. The relation between the thresholds, $T_1 \leq T_2$ or $T_2 \leq T_1$, depends on the relation between $\lr{\frac{1}{\Delta_{min}}}^2$ and $\frac{S^2}{K-1}$. Also note that the choice of $\alpha$ (which determines $\lambda$) affects the thresholds $T_1$ and $T_2$, but not their relation. Similar to the proof of Theorem \ref{theorem:3}, we restrict the range of $\alpha$, so that $T \geq T_2 = \frac{S^2}{\alpha^2(K-1)}$, which gives $\alpha \geq \frac{S}{\sqrt{T(K-1)}}$.

We now derive a bound on $R$. We consider three cases: $T_2  \leq T \leq T_1$, $T_2  \leq T_1 \leq T$, and $T_1 \leq T_2 \leq T$.

\paragraph{First case: $T_2 \leq T \leq T_1$.} By \eqref{eq:tleqT2} and \eqref{eq:tgeqT2} we have:
\begin{align}
&\sum_{t=1}^{T} R_t \leq \sum_{t=1}^{T_2} R_t + \sum_{t=T_2+1}^{T} R_t \notag\\
&\leq \sum_{t=1}^{T_2} \left( \frac{\sqrt{K-1}(\lambda+1)}{\sqrt{t}} - \frac{\lambda(K-1)}{S} \right) + \sum_{t=T_2+1}^{T} \lr{\frac{S(\lambda+1)}{2\alpha t}} + \sqrt{T} \notag\\
&\leq 2\sqrt{T_2(K-1)} (\lambda+1) - \frac{\lambda(K-1)T_2}{S} + \frac{S(\lambda+1)}{2\alpha} \log (\frac{T}{T_2}) + \sqrt{T_1},\label{eq:firstcase}
\end{align}
where in the second line we used $\sum_{t=1}^{T} \frac{1}{2\sqrt{t}} \leq \sqrt{T}$ and in the third line $\sum_{t=T_2+1}^{T} \frac{1}{t} \leq \log(T/T_2)$ and $\lambda \leq 1$ and $T \leq T_1$.

\paragraph{Second case: $T_2 \leq T_1 \leq T$.} By \eqref{eq:tleqT2}, \eqref{eq:tgeqT2}, and \eqref{eq:tgeqT1} we have:
\begin{align}
&\sum_{t=1}^{T} R_t \leq \sum_{t=1}^{T_2} R_t + \sum_{t=T_2+1}^{T_1} R_t + \sum_{t=T_1+1}^{T} R_t\notag\\
&\leq \sum_{t=1}^{T_2} \left( \frac{\sqrt{K-1}(\lambda+1)}{\sqrt{t}} - \frac{\lambda(K-1)}{S} \right) + \sum_{t=T_2+1}^{T} \lr{\frac{S(\lambda+1)}{2\alpha t}} + \sqrt{T_1} + \sum_{i=1}^{K-1} \sum_{t=T_1+1}^T \frac{\lambda+1}{4\alpha^2\Delta_i^2 t^{\frac{3}{2}} - 2\alpha\Delta_i t}\notag\\
&\leq 2\sqrt{T_2(K-1)} (\lambda+1) - \frac{\lambda(K-1)T_2}{S} + \frac{S(\lambda+1)}{2\alpha} \log (\frac{T}{T_2}) + \sqrt{T_1} + \sum_{i=1}^{K-1} \sum_{t=T_1+1}^T \frac{1}{2\alpha^2\Delta_i^2 t^{\frac{3}{2}} - \alpha\Delta_i t},\label{eq:secondcase}
\end{align}
where in the second line we used $\sum_{t=1}^{T_1} \frac{1}{2\sqrt{t}} \leq \sqrt{T_1}$ and in the third line $\sum_{t=T_2+1}^{T} \frac{1}{t} \leq \log(T/T_2)$ and $\lambda \leq 1$.

\paragraph{Third case: $T_1 \leq T_2 \leq T$.} 

By \eqref{eq:tleqT2} and \eqref{eq:tgeqT1} we have:
\begin{align}
&\sum_{t=1}^{T} R_t \leq \sum_{t=1}^{T_2} R_t + \sum_{t=T_2+1}^{T} R_t \notag\\
&\leq \sum_{t=1}^{T_2} \lr{ \frac{\sqrt{K-1}(\lambda+1)}{\sqrt{t}} - \frac{\lambda(K-1)}{S}} + \sqrt{T_2} + \sum_{t=T_2+1}^{T} \lr{\frac{S(\lambda+1)}{2\alpha t}}  + \sum_{i=1}^{K-1} \sum_{t=T_2+1}^T \frac{\lambda+1}{4\alpha^2\Delta_i^2 t^{\frac{3}{2}} - 2\alpha\Delta_i t}\notag\\
&\leq 2\sqrt{T_2(K-1)} (\lambda+1) - \frac{\lambda(K-1)T_2}{S} + \sqrt{T_2} +  \frac{S(\lambda+1)}{2\alpha} \log (\frac{T}{T_2})  + \sum_{i=1}^{K-1} \sum_{t=T_1+1}^T \frac{1}{2\alpha^2\Delta_i^2 t^{\frac{3}{2}} - \alpha\Delta_i t}.\label{eq:thirdcase}
\end{align}

\paragraph{Merging the cases:} Corollary \ref{cor:integral} provides an upper bound for the last terms of \eqref{eq:secondcase} and \eqref{eq:thirdcase}:

\begin{align*}
 \sum_{t=T_1+1}^T \frac{1}{2\alpha^2\Delta_i^2 t^{\frac{3}{2}} - \alpha\Delta_i t} &\leq \frac{2}{\alpha\Delta_i}
.
\end{align*}
Now we combine \eqref{eq:firstcase}, \eqref{eq:secondcase}, and \eqref{eq:thirdcase}, and obtain following bound for $R$:
\begin{align}
R &= \sum_{t=1}^T R_t + \lambda C\notag\\
&\leq 2\sqrt{T_2(K-1)} (\lambda+1) - \frac{\lambda(K-1)T_2}{S} +  \frac{S(\lambda+1)}{2\alpha} \log (\frac{T}{T_2}) + \lambda C\notag\\
&\quad+ \sqrt{\max\{T_1,T_2\}} + \sum_{i=1}^{K-1} \frac{2}{\alpha\Delta_i}.\label{eq:merged}
\end{align}
We note that $\max\lrc{T_1,T_2} = \max\lrc{\frac{S^2}{(K-1)\alpha^2}, \frac{1}{\Delta_{min}^2\alpha^2}}\leq \frac{S^2}{\alpha^2}$. 
Moreover, 
by substituting $T_2 = \frac{S^2}{\alpha^2(K-1)}$ into \eqref{eq:merged}  we obtain:
\begin{align}
R &\leq 2(\lambda+1)\frac{S}{\alpha} - \frac{\lambda S}{\alpha^2} + \frac{S(\lambda+1)}{2\alpha} \log\lr {\frac{\alpha^2(K-1)T}{S^2}} + \lambda C + \frac{S}{\alpha} + \sum_{i=1}^{K-1} \frac{2}{\alpha\Delta_i}
\notag\\&= \frac{\lambda+1}{2}\lrs{
4\frac{S}{\alpha} - \frac{S}{\alpha} + \frac{S}{\alpha} \log\lr {\frac{(K-1)T}{S^2}} + \frac{2S}{\alpha} \log(\alpha) + \alpha C}
 + \frac{3S}{\alpha}
 \notag\\ 
&= \underbrace{\frac{1}{2-\alpha}\left[\frac{S}{\alpha}
\left(3 + \log \left(\frac{T(K-1)}{S^2}\right)\right) + \frac{2S}{\alpha} \log(\alpha) + \alpha C
\right]}_{h(1,\alpha)} + \frac{3S}{\alpha}
.\label{eq:extraterms}
\end{align}
We recognize that the first term in equation \eqref{eq:extraterms} is $h(1,\alpha)$, which was defined earlier in equation \eqref{eq:alpharegret}. 


\paragraph{Proof of the general bound in equation \eqref{eq:sto}:} By taking $\lambda = 1$, which corresponds to $\alpha = 1$, we obtain
\begin{align*}
R &\leq S\left(\log \left(\frac{T(K-1)}{S^2}\right) + 3 \right) + C +3S\\
&= S\left(\log \left(\frac{T(K-1)}{S^2}\right) + 6 \right) + C.
\end{align*}
Plugging this and the value of $\lambda$ into \eqref{eq:RegR} completes the proof of \eqref{eq:sto}.

\paragraph{Proof of the refined bound in equation \eqref{eq:stoC}:}
We note that the range of $C$ in the refined bound in equation \eqref{eq:stoC} is the same as in the refined bound in \eqref{eq:BstoC} in Theorem~\ref{theorem:3} for $B=1$. We take $\alpha^*$ as in equation \eqref{eq:alpha-star}, i.e., $\alpha^* = \sqrt{\frac{-S}{C} W_{-1}\lr{\frac{-CS}{e(K-1)T})}}$. By Lemma~\ref{lemma:lambert} we have $-W_{-1}\lr{\frac{-CS}{e(K-1)T})} \geq 1$, and thus $\alpha^* \geq \sqrt{\frac{S}{C}}$. By plugging this bound and the bound on $h(1,\alpha^*)$ from  equation \eqref{eq:h-bound} into equation \eqref{eq:extraterms}, we obtain:
\begin{align*}
    R &\leq \sqrt{C S}\lr{\sqrt{\log\frac{T(K-1)}{CS}} + 2} + S\lr{\log\frac{T(K-1)}{CS} + \sqrt{2\log\frac{T(K-1)}{CS}}+ 2} + 3\sqrt{CS}\\
    &= \sqrt{C S}\lr{\sqrt{\log\frac{T(K-1)}{CS}} + 5} + S\lr{\log\frac{T(K-1)}{CS} + \sqrt{2\log\frac{T(K-1)}{CS}}+ 2}.
\end{align*}
Plugging this bound into \eqref{eq:RegR} and using the fact that $\lambda \leq 1$ completes the proof of \eqref{eq:stoC}.
\end{proof}

\end{document}